\documentclass{article}

\usepackage{microtype}
\usepackage{graphicx}
\usepackage{subcaption}
\usepackage{booktabs} 
\usepackage[hyphens]{url}
\usepackage{hyperref}
\usepackage{hyperref}
\usepackage{multirow}
\usepackage{lineno}

\usepackage[accepted]{icml2026}

\usepackage{amsmath}
\usepackage{amssymb}
\usepackage{mathtools}
\usepackage{amsthm}

\usepackage[capitalize,noabbrev]{cleveref}

\theoremstyle{plain}
\newtheorem{theorem}{Theorem}[section]
\newtheorem{proposition}[theorem]{Proposition}
\newtheorem{lemma}[theorem]{Lemma}

\theoremstyle{definition}
\newtheorem{definition}[theorem]{Definition}

\theoremstyle{remark}
\newtheorem{remark}[theorem]{Remark}

\usepackage[textsize=tiny]{todonotes}
\begin{document}

\twocolumn[
  \icmltitle{Learning Fingerprints for Medical Time Series with Redundancy-Constrained Information Maximization}



  \icmlsetsymbol{equal}{*}

  \begin{icmlauthorlist}
    \icmlauthor{Huayu Li}{equal,az}
    \icmlauthor{ZhengXiao He}{equal,az}
    \icmlauthor{Xiwen Chen}{comp}
    \icmlauthor{Jingjing Wang}{sch}
    \icmlauthor{siyuan tian}{msr}
    \icmlauthor{Jinghao Wen}{vu}
    \icmlauthor{Ao Li}{az}
  \end{icmlauthorlist}

  \icmlaffiliation{az}{University of Arizona, AZ, USA}
  \icmlaffiliation{comp}{Morgan Stanley, NY, USA}
  \icmlaffiliation{msr}{Microsoft Research, Shanghai, China}
  \icmlaffiliation{sch}{Clemson University, SC, USA}
  \icmlaffiliation{vu}{Villanova University, IL, USA}
  \icmlcorrespondingauthor{Huayu Li}{hl459@arizona.edu}
  \icmlkeywords{Machine Learning, ICML}

  \vskip 0.3in
]



\printAffiliationsAndNotice{}  

\begin{abstract}
     Learning meaningful representations from medical time series (MedTS) such as ECG or EEG signals is a critical challenge. These signals are often high-dimensional, variable-length and rife with noise. Existing self-supervised approaches, such as Masked Autoencoders (MAEs) are highly effective for pre-training general-purpose encoders. However, they do not explicitly learn compact and semantically interpretable latent representations, typically relying on heuristic aggregation strategies such as global average pooling or a designated [CLS] token. We propose a novel framework that compresses a variable-length MedTS into a fixed-size set of $k$ latent Fingerprint Tokens. Our architecture employs a cross-attention bottleneck to generate these tokens and is trained with a dual-objective function. The first objective is a reconstruction loss, which ensures the tokens are \textit{sufficient statistics} for the original data. The second, a diversity penalty based on the Total Coding Rate (TCR), explicitly minimizes the redundancy between tokens, encouraging them to become statistically \textit{disentangled} representations. We present the theoretical justification for our method, framing it as a novel \textbf{Disentangled Rate-Distortion} problem. This approach produces a low-dimensional, interpretable, and sample-efficient representation, where each token is encouraged to capture an independent factor of variation, paving the way for more robust digital biomarkers.
     \vspace{-10pt}
\end{abstract}

\begin{figure}[t]
    \centering
    \includegraphics[width=0.425\textwidth]{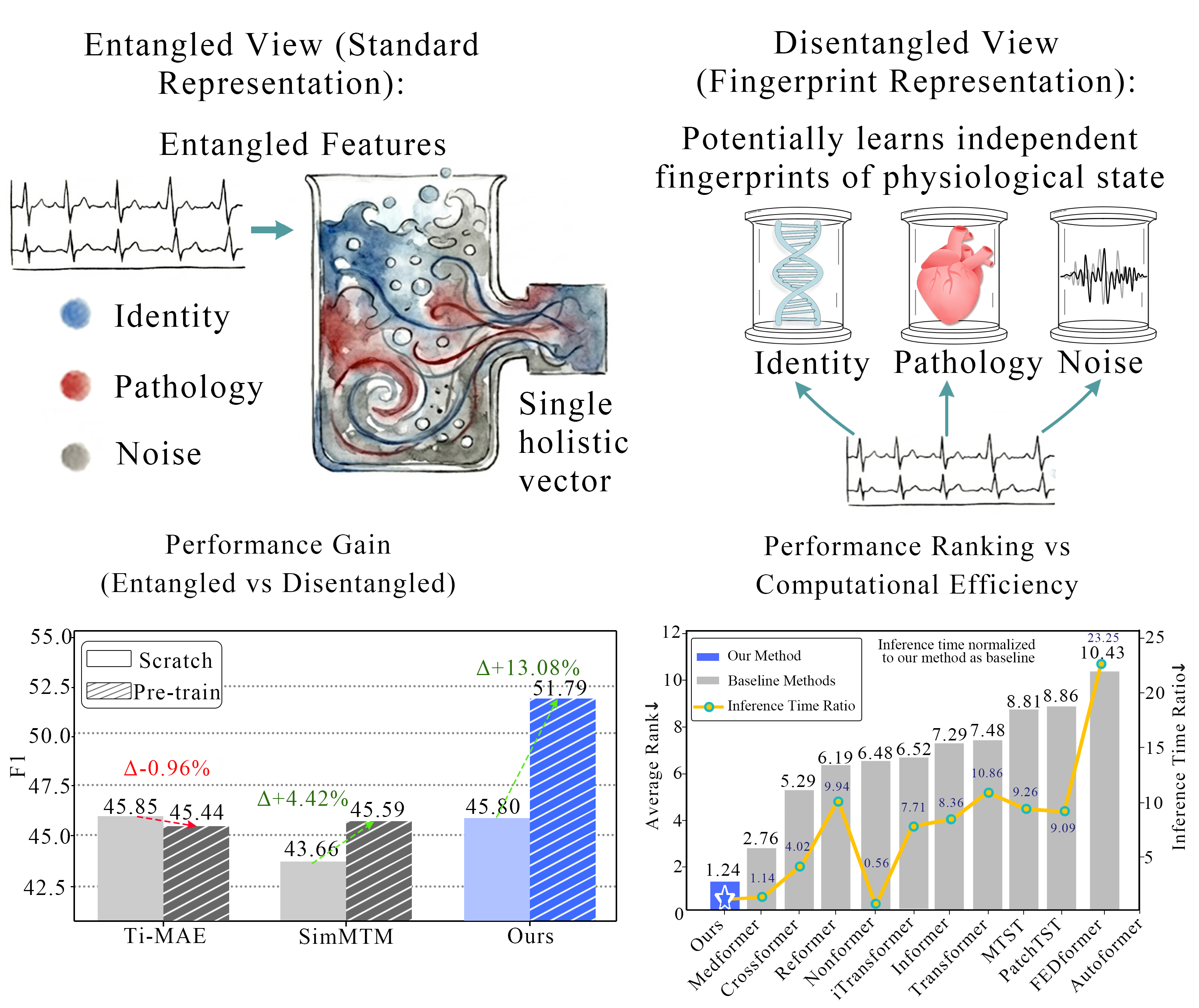}
    \caption{\textbf{Paradigm Shift in MedTS Representation.} Unlike standard MAEs that produce variable-length, entangled sequences (Top Left), TS-Fingerprint (Top Right) imposes a division of labor, compressing signals into fixed, disentangled tokens where each slot captures an independent physiological factor. The performance plots (Bottom Left) demonstrate that this disentanglement design is crucial for pre-training, outperforming entangled baselines~\cite{li2023ti, dong2023simmtm} by a significant margin. Bottom Right shows the average rank of models across all datasets and metrics (lower is better). The proposed method achieves the best overall performance, attaining an average rank of 1.24 while maintaining fast inference speed (14\% faster than Medformer) and consistently outperforming all baseline methods. }
    \label{fig:paradigm_shift}
    \vspace{-20pt}
\end{figure}

\vspace{-15pt}
\section{Introduction}
The efficacy of downstream clinical diagnosis hinges on representation learning, the ability to distill high-dimensional, noisy Medical Time Series (MedTS) into compact and meaningful latent codes. Recently, this domain has been dominated by the masked reconstruction paradigm adapted from computer vision \cite{he2022masked}. While these models (e.g., SimMTM, TiMAE) excel at pre-training weights by forcing networks to memorize signal details \cite{dong2023simmtm}, we question the clinical validity of this dominant paradigm. By prioritizing raw data fidelity (minimizing Mean Squared Error), these methods inadvertently produce entangled representations where critical physiological factors such as cardiac rhythm versus morphology are inextricably mixed across high-dimensional vectors. We argue that the failure lies not in the capacity of deep learning, but in the improper formulation of the learning objective: treating the latent space as a passive compression bucket rather than an active factorization engine.

In high-stakes medical environments, a representation that is merely sufficient for reconstruction but opaque in structure is untenable. An ideal clinical representation must be: (1) \textbf{Compressed} into a fixed-size vector invariant to input length (2) \textbf{Sufficient} in retaining diagnostic signals and (3) \textbf{Disentangled}, such that statistically independent latent dimensions correspond to distinct physiological factors. Existing general-purpose time-series learners largely fail to meet these criteria, yielding holistic embeddings where digital biomarkers are diffused across the entire latent space.

To address these pitfalls, we fundamentally repurpose the goal of the bottleneck. As illustrated in Figure \ref{fig:paradigm_shift}, we propose a paradigm shift from the \textbf{Entangled View}, which crams all signal variance (noise, identity, pathology) into a holistic vector to the \textbf{Fingerprint View}. By treating the encoder as a \textbf{Redundancy-Constrained Information Maximizer}, we assign $k$ fixed slots (Fingerprints) and force them to compete for signal variance. 
This imposes a structural bottleneck that encourages distinct tokens to specialize in different dominant modes of variation, thereby reducing representational redundancy. We demonstrate empirically that this inductive bias has the potential to isolate distinct physiological states in latent space.

Technically, we frame this as a \textbf{Disentangled Rate-Distortion} problem. We propose \textbf{TS-Fingerprint}, a framework designed as the practical realization of this objective. Our architecture compresses variable-length dynamics into a fixed set of $k$ semantically meaningful \textit{Fingerprint Tokens}. This is achieved via a dual-objective optimization: a masked reconstruction loss that enforces sufficiency (maximizing mutual information $I(X;F')$) and a Total Coding Rate (TCR) regularizer that acts as a repulsive force, geometrically pushing the tokens to be statistically orthogonal.

Our main contributions are summarized as follows: \textbf{(1) Theoretical Formulation:} We formulate the MedTS representation problem within a Disentangled Rate-Distortion framework, demonstrating theoretically that minimizing latent redundancy leads to strictly superior sample efficiency for downstream clinical tasks \textbf{(2) Fingerprint Token Architecture:} We propose a novel encoder that learns a compact, fixed-size set of disentangled Fingerprint Tokens, providing a transparent and clinically interpretable alternative to holistic black-box embeddings \textbf{(3) SOTA Performance \& Interpretability:} Extensive experiments across ECG, EEG, and activity benchmarks demonstrate that our method achieves state-of-the-art classification performance (Figure \ref{fig:paradigm_shift}) while enabling precise, medically relevant explanations through token-specific attention maps.

\vspace{-10pt}
\section{Related Works}

\subsection{Deep Learning Backbones for Time Series}
Early approaches for medical time series (MedTS) relied on Recurrent Neural Networks (RNNs) and variants like LSTMs and GRUs \cite{cho2014learning} to model temporal dependencies. Recently, Transformer-based models \cite{vaswani2017attention} have become dominant due to their ability to capture long-range dependencies, despite their quadratic complexity.
To address these computational bottlenecks and improve generalization, several general-purpose time series backbones have been proposed. These include patch-based approaches like PatchTST \cite{nie2022time}, which segments signals to reduce context length, and inverted architectures like iTransformer \cite{liu2023itransformer} that embed channels rather than time steps. Convolutional variants such as ModernTCN \cite{luo2024moderntcn} and multi-resolution models like MTST \cite{zhang2024multi} and CSFormer \cite{wan2025csfformer} have also demonstrated strong performance on forecasting and classification benchmarks.
However, most of these backbones are designed to output a sequence of feature vectors that scales with the input length ($T$). While effective for forecasting or dense prediction, they do not inherently provide a compressed, fixed-size representation suitable for interpretable patient-level phenotyping.

\subsection{Medical Time Series Representation Learning}
Specific to the medical domain, researchers have developed specialized architectures to handle the unique characteristics of physiological signals, such as high dimensionality, noise, and multi-scale dynamics \cite{ismail2020inceptiontime,wang2024medformer}. 
More recently, Transformer variants have been tailored for clinical data. For instance, Medformer \cite{wang2024medformer} introduces a multi-granularity patching mechanism to capture both fine-grained cardiac waveforms and long-term trends. Similarly, MTST \cite{zhang2024multi} employs multi-resolution attention to handle the diverse frequency components inherent in EEG and ECG signals.
Despite their success in classification accuracy, these methods typically focus on minimizing prediction error rather than optimizing the structure of the latent space. The resulting representations remain high-dimensional and black-box in nature, lacking the specific \textit{disentanglement} properties required to isolate distinct clinical biomarkers (e.g., separating rhythm from morphology) for transparent diagnosis.

\subsection{Self-Supervised Representation Learning}
Self-supervised learning (SSL) has emerged as a powerful paradigm for learning from unlabeled data. Contrastive methods, such as SimCLR \cite{chen2020simple} in vision or TS2Vec \cite{yue2022ts2vec} and TS-TCC \cite{eldele2021time} in time series learn representations by pulling positive pairs (augmentations) together while pushing negative pairs apart. While effective, these methods often rely on complex data augmentations and large batch sizes to mine negative samples. 
Alternatively, generative Masked Autoencoders (MAEs) \cite{he2022masked} learn by reconstructing randomly masked input patches. While MAEs excel at pre-training encoder weights, they are suboptimal for our objective for two reasons. First, standard MAE encoders output a variable-length sequence of patch tokens inside of a compact latent code, leading to some redundant and entangled features.
In contrast, our proposed method focuses on learning the latent code $F'$ itself as the primary product. Unlike previous works that use auxiliary learnable tokens for exogenous variables or forecasting (e.g., TimeXer~\cite{wang2024timexer}, GAFormer~\cite{xiao2024gaformer}), we design our Fingerprint Tokens specifically for medical explainability. By enforcing a fixed size ($k$) and statistical independence via information-theoretic constraints, our method provides a disentangled interface for downstream clinical tasks that previous general-purpose backbones lack.

\begin{figure*}[]
    \centering
    \includegraphics[width=0.82\textwidth]{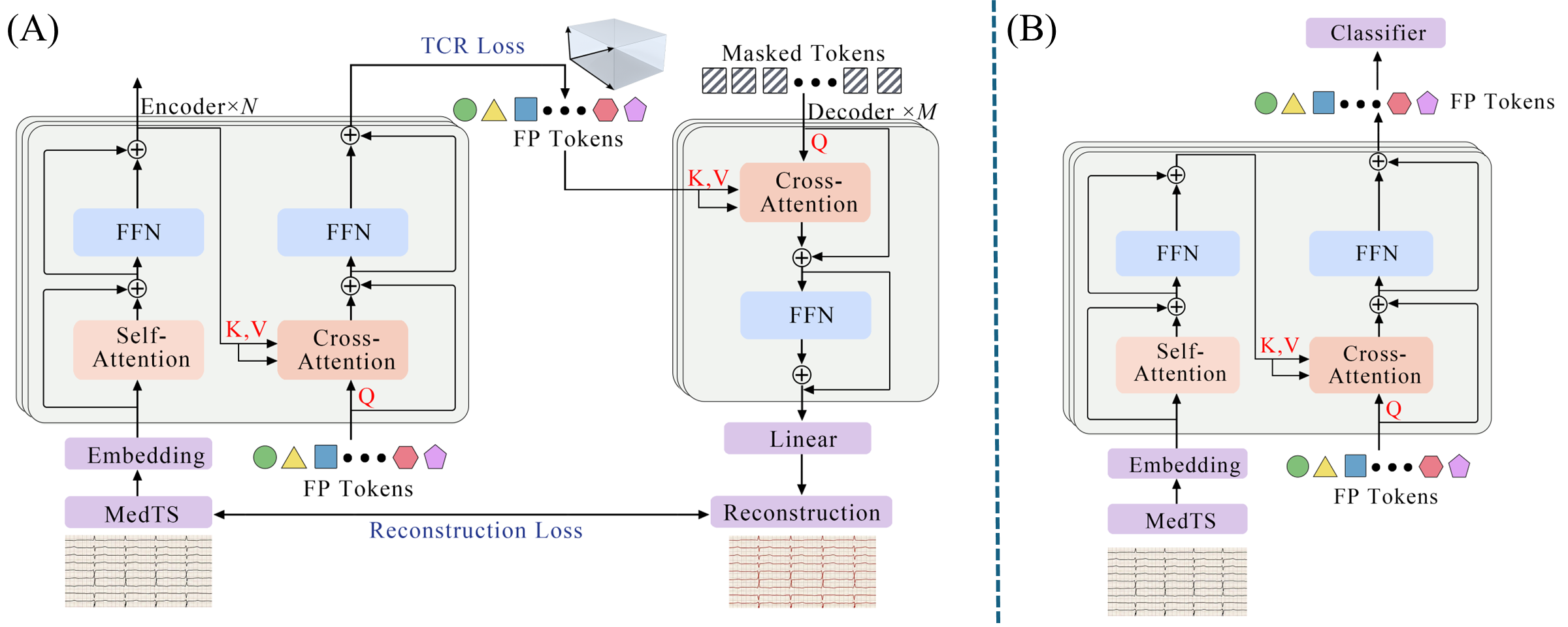}
    \caption{The TS-Fingerprint Framework. We achieve Redundancy-Constrained Information Maximization through a dual-stage process. (Left) Pre-Training: The Fingerprint Encoder ($E_\theta$) uses iterative cross-attention to compress variable-length MedTS patches into a fixed set of $k$ latent tokens. These tokens are shaped by two forces: (1) Disentanglement, where a Total Coding Rate (TCR) loss ($\mathcal{L}_{\text{div}}$) geometrically forces the tokens to be orthogonal; and (2) Sufficiency, where a Masked Decoder ($D_\phi$) reconstructs the signal strictly from the bottleneck tokens. (Right) Downstream Task: The pre-trained encoder encodes the MedTS into disentangled Fingerprint Tokens then fed into a classifier for clinical inference.}
\label{fig:architecture}
\vspace{-10pt}
\end{figure*}

\section{Proposed Method: Fingerprint Tokens}

We propose \textbf{TS-Fingerprint}, a framework that reframes medical time-series representation learning from a heuristic reconstruction task to a formally constrained \textit{Redundancy-Constrained Information Maximization} problem. In this section, we first formalize the learning objective using information-theoretic bounds (Section \ref{sec:theory_formulation}) and then describe the specific neural architecture designed to restrict the hypothesis class to satisfy these bounds (Section \ref{sec:architecture}).

\subsection{Main Theoretical Framework}
\label{sec:theory_formulation}

\textbf{Problem Formulation.} Let $\mathcal{X} \subset \mathbb{R}^{T \times C}$ be the instance space of medical time series governed by an unknown distribution $\mathcal{D}$. We seek a representation mapping $E_\theta: \mathcal{X} \to \mathbb{R}^{k \times d}$ that compresses an instance $x$ into a set of $k$ disentangled latent tokens $F'$.

Standard autoencoders implicitly treat the latent space as a passive compression bucket, maximizing a lower bound on the mutual information $I(X; F')$ via reconstruction. However, we argue that \textit{sufficiency} alone leads to entangled representations where physiological factors are diffused across dimensions. To address this, we formalize the representation learning problem as \textbf{Redundancy-Constrained Information Maximization}. We impose a strict independence constraint, seeking to maximize the \textit{Information Bottleneck} capacity while simultaneously minimizing the \textit{Total Correlation} (TC) of the latent space.

\begin{theorem}[Redundancy-Constrained Information Maximization]
\label{theorem:infomax}
    Let $F'$ be the latent representation and $X$ be the input. Minimizing the composite objective $\mathcal{L}_{\text{total}} = \mathcal{L}_{\text{rec}} + \lambda \mathcal{L}_{\text{div}}$ (defined in Sec \ref{sec:objectives}) is formally equivalent to:
    \begin{equation}
        \max_{\theta} \underbrace{I(X; F')}_{\text{Sufficiency}} - \lambda \underbrace{\text{TC}(F')}_{\text{Independence}}.
    \end{equation}
    Specifically, $\mathcal{L}_{rec}$ maximizes the Evidence Lower Bound (ELBO) of the mutual information $I(X; F')$, while $\mathcal{L}_{div}$ minimizes the Total Correlation $\text{TC}(F')=\sum_{i=1}^k H(f'_i) - H(F')$ under the assumption that the latent distribution is Gaussian.
\end{theorem}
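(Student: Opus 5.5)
The plan is to treat the two terms of $\mathcal{L}_{\text{total}}$ separately and then combine them. For each term we show that it is, up to additive constants independent of $\theta$ and a monotone reparametrization, the negative of the corresponding information-theoretic quantity. Because the statement says "formally equivalent," we read it as an equivalence of optimization problems under explicit modelling assumptions. These are: a Gaussian observation model for the decoder and a Gaussian model for the token distribution. We will state them explicitly rather than claim an unconditional identity.

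\textbf{Sufficiency term.} Write $I(X;F') = H(X) - H(X\mid F')$. Note that $H(X)$ is fixed by $\mathcal{D}$ and does not depend on $\theta$. For any decoder $q_\phi(x\mid f')$, nonnegativity of $\mathrm{KL}(p(x\mid f')\,\|\,q_\phi(x\mid f'))$ gives the Barber--Agakov variational bound
\begin{equation*}
I(X;F') \ge H(X) + \mathbb{E}_{p(x,f')}\big[\log q_\phi(x\mid F')\big].
\end{equation*}
Take $q_\phi(x\mid f')=\mathcal{N}(D_\phi(f'),\sigma^2 I)$, restricted to the masked coordinates. Then $-\log q_\phi$ equals $\tfrac{1}{2\sigma^2}\|x-D_\phi(f')\|^2$ plus a constant, which is $\mathcal{L}_{\text{rec}}$ up to scaling. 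Minimizing $\mathcal{L}_{\text{rec}}$ jointly over $(\theta,\phi)$ therefore maximizes this lower bound (the ELBO). The bound is tight when $q_\phi$ matches the true posterior. For the masked variant, we note that the same argument applies to $I(X_{\text{mask}};F')$, which lower-bounds the relevant information by the data-processing inequality.

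\textbf{Independence term.} Assume that, over a batch, the tokens $F'$ are jointly Gaussian with covariance $\Sigma$, and that each marginal $f'_i$ is Gaussian with covariance $\Sigma_{ii}$. The Gaussian entropy formula then gives
\begin{equation*}
\mathrm{TC}(F') = \tfrac12\Big(\sum_{i}\log\det\Sigma_{ii} - \log\det\Sigma\Big).
\end{equation*}
The TCR regularizer is $-\tfrac12\log\det(I+\alpha\Sigma)$, computed on normalized tokens. Normalizing each token fixes the per-token scale, so $\sum_i \log\det\Sigma_{ii}$ is essentially pinned. Minimizing $-\log\det$ of the joint covariance is then the operative part of minimizing TC. We will make this precise by working in the regime $\alpha\Sigma\gg I$, where $\log\det(I+\alpha\Sigma)\approx \log\det\Sigma + \text{const}$. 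Hadamard's inequality ($\det\Sigma\le\prod_i\det\Sigma_{ii}$, with equality iff the off-diagonal blocks vanish) identifies the minimizer as statistically orthogonal tokens, which are independent under Gaussianity. Combining the two terms with the same weight $\lambda$ yields the stated objective.

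\textbf{Main obstacle.} The hard step is the second one. $\mathcal{L}_{\text{div}}$ is not literally $\mathrm{TC}$: the $\epsilon$-regularized $\log\det(I+\alpha\Sigma)$ differs from $\log\det\Sigma$, and the marginal-entropy sum is only controlled through normalization. I would first try to prove that both objectives share the same minimizers over the normalized set, namely block-diagonal $\Sigma$ with maximal determinant. I would then argue equivalence at the level of optima rather than values. If that fails, I would weaken the claim to "$\mathcal{L}_{\text{div}}$ is a monotone surrogate upper bound on $\mathrm{TC}$ up to constants." This surrogate version follows from $\log\det(I+\alpha\Sigma)\ge\log\det(\alpha\Sigma)$.
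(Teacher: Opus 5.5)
Your sufficiency half is correct and is the paper's own argument: the Barber--Agakov bound plus a fixed-variance Gaussian decoder. The gap is in the independence half, at the step you flag yourself, and the paper does not close it either; its proof only asserts that $\tfrac12\sum_i\log\Sigma_{ii}-\tfrac12\log\det\Sigma_{F'}$ is ``geometrically equivalent'' to the TCR term. Your first move repeats this. You rewrite $\mathcal{L}_{\text{div}}$ as $-\tfrac12\log\det(I+\alpha\Sigma)$, with $\Sigma$ the covariance of $F'$ across inputs. But the paper's $\mathcal{L}_{\text{div}}$ is computed on a single instance. By Sylvester's determinant identity it depends only on the within-sample Gram matrix $G_{ij}=\langle f'_i,f'_j\rangle$, whereas $\mathrm{TC}(F')$ depends on the joint law of the tokens over $x\sim\mathcal{D}$. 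Optimality of the first does not control the second, even under Gaussianity. Take $d$ even, $R$ an orthogonal skew-symmetric $d\times d$ matrix, $f'_1\sim\mathcal{N}(0,I_d)$ and $f'_2=Rf'_1$. On every sample $\langle f'_1,f'_2\rangle=(f'_1)^\top Rf'_1=0$ and $\|f'_2\|=\|f'_1\|$, so $G$ is diagonal. By Hadamard's inequality applied to $I+\alpha G$, $\mathcal{L}_{\text{div}}$ is then optimal among configurations with these token norms, and after normalization $G=I$ exactly. Yet $(f'_1,f'_2)$ is jointly Gaussian with singular covariance, so $\mathrm{TC}(F')=+\infty$. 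The claim can therefore hold at best for a differently defined, batch-covariance regularizer.

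Even with that redefinition, your repairs fail, for four reasons.

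Unit-norm tokens constrain the trace of each block $\Sigma_{ii}$, not its determinant, so $\sum_i\log\det\Sigma_{ii}$ is not pinned. Since $-\log\det\Sigma=2\,\mathrm{TC}(F')-\sum_i\log\det\Sigma_{ii}$, the quantity your surrogate minimizes can fall while TC rises, simply by making the marginal blocks more isotropic. For the same reason the two objectives do not share minimizers: every block-diagonal $\Sigma$ has $\mathrm{TC}=0$, while the log-det optimum under trace constraints is the single point $\Sigma=\tfrac1d I$. The regime $\alpha\Sigma\gg I$ is an extra assumption that discards exactly the near-collinear configurations where TC is large and where $\log\det(I+\alpha\Sigma)$ and $\log\det\Sigma$ disagree.

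Your fallback has the inequality backwards. $\log\det(I+\alpha\Sigma)\ge\log\det(\alpha\Sigma)$ gives $\mathcal{L}_{\text{div}}\le-\tfrac12\log\det\Sigma+c$, i.e.\ (with pinned diagonals) $\mathcal{L}_{\text{div}}\le\mathrm{TC}(F')+c'$. That is a lower bound and gives no control over TC. In fact no bound $\mathrm{TC}(F')\le\mathcal{L}_{\text{div}}+C$ can exist: on normalized tokens $\mathcal{L}_{\text{div}}$ stays in a bounded interval, while TC diverges as two tokens become collinear.

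Agreement of optima term by term would still not make the composite problems equivalent. The MSE is $2\sigma^2$ times the negative log-likelihood up to a constant, so the effective weight is $\lambda/(2\sigma^2)$, not $\lambda$. Also, substituting for TC anything that is not an affine function of it changes the minimizers of the weighted sum.

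What you can honestly prove is one-sided: $\mathcal{L}_{\text{rec}}$ maximizes a lower bound on $I(X;F')$, and the global minimizer of a batch-covariance version of $\mathcal{L}_{\text{div}}$ under per-token trace constraints has $\mathrm{TC}=0$.
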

\begin{proof}
    See Supplemental Material \ref{proof:infomax}. The proof derives the Variational Lower Bound for the joint objective, demonstrating that $\mathcal{L}_{rec}$ maximizes the expected data log-likelihood (a lower bound on $I(X; F')$), while $\mathcal{L}_{div}$ minimizes the multi-information term, which aligns with Total Correlation for Gaussian variables.
\end{proof}

\subsection{Fingerprint Architecture: Restricting the Hypothesis Class}
\label{sec:architecture}
We propose the \textbf{Fingerprint Architecture} as the structural realization of Theorem \ref{theorem:infomax}. We reflect on the fact that standard Transformers induce a hypothesis space with excessive capacity, which encourages the memorization of high-frequency noise. To constrain the hypothesis class and enforce the theoretical bounds defined above, we \textit{repurpose} the encoder-decoder structure.

\vspace{-5pt}
\paragraph{Encoder: The Duty of Disentangled Compression}
To implement the capacity constraint of the Information Bottleneck, we propose our embedding process as follow: \textbf{1. Fixed-Rank Bottleneck:} Instead of maintaining the temporal sequence length, we define a learnable query set $Q \in \mathbb{R}^{k \times d}$ where $k \ll T$. The mapping $F' = \text{Attention}(Q, K=P, V=P)$ projects the high-dimensional input $P$ onto a fixed $k$-dimensional subspace; \textbf{2. Inductive Bias via Competition:} This architecture acts as a structural regularizer. By fixing the number of latent slots to $k$, we force the learned queries to compete for signal variance, discarding high-frequency variations that do not align with the principal physiological components defined by $Q$. 

\vspace{-5pt}
\paragraph{Decoder: The Duty of Sufficiency}
To ensure that the low-rank projection $F'$ remains a \textit{sufficient statistic} for the input $X$ (preserving $I(X; F')$), we employ a Masked Reconstruction objective. Crucially, unlike standard MAE approaches that may leak information through visible patches, our decoder $D_\phi$ conditions generation \textit{solely} on the Fingerprint tokens $F'$, where the mask tokens only contains the positional embedding of the original patches. This enforces a strict data processing inequality chain $X \to F' \to \hat{X}$, ensuring that no diagnostic information bypasses the bottleneck. 

\subsection{Learning Objectives: Geometry of the Latent Space}
\label{sec:objectives}
We formulate the training objective not merely as error minimization, but as a geometric regularization process that shapes the latent manifold to satisfy the bounds in Theorem \ref{theorem:infomax}. The total loss acts as a dual-force mechanism: an attractive force ensuring data fidelity and a repulsive force ensuring token disentanglement.

\paragraph{Reconstruction Loss (Enforcing Sufficiency)}
To strictly enforce the \textbf{Sufficiency} term ($I(X; F')$), we maximize the data log-likelihood. Under the standard assumption of Gaussian decoder noise, maximizing the variational lower bound is equivalent to minimizing the Mean Squared Error:
\begin{equation}
    \mathcal{L}_{\text{rec}} = \mathbb{E}_{x \sim \mathcal{D}} \left[ \| x - D_\phi(T_{\text{mask}}, F') \|^2 \right].
\end{equation}
By conditioning the decoder strictly on $F'$, we force the bottleneck to retain the necessary statistics required to reconstruct the complex physiological morphology of the input.

\paragraph{Diversity Loss (Maximizing Latent Volume)}
Minimizing reconstruction error alone does not guarantee a disentangled basis. For example, standard autoencoders often suffer from \textit{feature collapse}, where multiple tokens degenerate into identical principal components. To enforce the \textbf{Independence} term (minimizing $\text{TC}(F')$), we treat latent tokens as vectors in a high-dimensional space and maximize the volume of the parallelotope they span. We employ the \textbf{Total Coding Rate (TCR)} \cite{yu2020learning} as a geometric regularizer:
\begin{equation}
    \mathcal{L}_{\text{div}} = - \frac{1}{2} \log \det \left( I + \frac{d}{k\epsilon^2} F'^\top F' \right).
\end{equation}
Here, we minimize the negative log-determinant (equivalently maximizing the volume). This acts as an explicit repulsive force between tokens.
\begin{lemma}[Orthogonality and Volume Maximization]
\label{lemma:orthogonality}
    Let $\Sigma_{F'}$ be the empirical covariance matrix of the latent tokens. The Total Coding Rate is maximized (and redundancy minimized) if and only if $\Sigma_{F'}$ is diagonal. Minimizing $\mathcal{L}_{\text{div}}$ under a fixed energy constraint is equivalent to enforcing pairwise orthogonality $\langle f'_i, f'_j \rangle \approx 0$ for $i \neq j$.
\end{lemma}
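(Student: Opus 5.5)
The natural tool is Hadamard's inequality combined with a fixed-trace (energy) constraint. Write $G = F'^\top F'$, or equivalently the Gram/covariance matrix $\Sigma_{F'}$ up to scaling, which is positive semidefinite. Let $\alpha = \frac{d}{k\epsilon^2}$ and $M = I + \alpha\,\Sigma_{F'}$. Then $M$ is positive definite, and its diagonal entries are $M_{ii} = 1 + \alpha\|f'_i\|^2$.

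\textbf{Step 1 (the ``if and only if'').} By Hadamard's inequality,
\begin{equation}
\log\det M \le \sum_i \log M_{ii},
\end{equation}
with equality if and only if $M$ is diagonal. Since $I$ is diagonal, $M$ is diagonal exactly when $\Sigma_{F'}$ is diagonal. The diagonal of $M$ depends only on the token energies $\|f'_i\|^2$. Hence, among all configurations with prescribed energies, the Total Coding Rate $\tfrac12\log\det M$ attains its maximum if and only if $\Sigma_{F'}$ is diagonal. This is the precise sense of ``maximized if and only if diagonal''; without fixing the diagonal the TCR is unbounded, so the statement must be read under this constraint.

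\textbf{Step 2 (fixed total energy).} Next, impose only $\operatorname{tr}\Sigma_{F'} = \sum_i \|f'_i\|^2 = c$. Maximizing $\sum_i \log(1+\alpha\|f'_i\|^2)$ under this constraint is a concave problem, and by Jensen's inequality it is solved by equal energies $c/k$. Combined with Step 1, the global maximizer of the TCR under a trace constraint is $\Sigma_{F'} = (c/k) I$, i.e.\ orthogonal, equal-norm tokens. One can check this directly through the eigenvalues: $\sum_j \log(1+\alpha\lambda_j)$ with $\sum_j \lambda_j = c$ is maximized at equal $\lambda_j$, which forces $\Sigma_{F'}$ to be a multiple of the identity.

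\textbf{Step 3 (link to pairwise orthogonality).} The off-diagonal entries of the Gram matrix are exactly $\langle f'_i, f'_j\rangle$, so diagonality means $\langle f'_i, f'_j\rangle = 0$ for $i \neq j$. To justify ``$\approx$'' for near-optimal solutions, I would compute the gradient
\begin{equation}
\partial \mathcal{L}_{\text{div}}/\partial G = -\tfrac{\alpha}{2} M^{-1}.
\end{equation}
I would then use a second-order expansion around a diagonal $M$: writing $M = D^{1/2}(I+E)D^{1/2}$ with $E$ having zero diagonal gives $\log\det(I+E) \approx -\tfrac12\|E\|_F^2$. So the gap to the optimum controls the normalized squared inner products $\langle f'_i, f'_j\rangle^2 / \big((1+\alpha\|f'_i\|^2)(1+\alpha\|f'_j\|^2)\big)$. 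Minimizing $\mathcal{L}_{\text{div}}$ therefore drives these toward zero.

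\textbf{Main obstacle.} The statement mixes the $d\times d$ matrix $F'^\top F'$ with the $k\times k$ token Gram matrix $F'F'^\top$. I would handle this with Sylvester's identity, $\det(I_d + \alpha F'^\top F') = \det(I_k + \alpha F'F'^\top)$, so that the argument can be run on the token Gram matrix, whose off-diagonals are the pairwise inner products. A second difficulty is stating the energy constraint precisely. It must be made explicit, as fixed norms or fixed trace, for the ``iff'' to be true; I would state it as a hypothesis rather than try to avoid it.
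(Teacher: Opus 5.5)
Your proof is correct and uses the same key inequality as the paper, Hadamard's inequality with the diagonal held fixed. However, you apply it to a different matrix, and that changes what gets proved.

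\textbf{The paper's route.} It assumes $F'$ is Gaussian and writes $\text{TC}(F')=\tfrac12\log\bigl(\prod_i\Sigma_{ii}/\det\Sigma_{F'}\bigr)$. It applies Hadamard to $\Sigma_{F'}$ itself, giving $\text{TC}\ge 0$ with equality iff $\Sigma_{F'}$ is diagonal. It then identifies $\Sigma_{ij}$ with $\mathbb{E}[\langle f'_i,f'_j\rangle]$ for centered data and uses the fixed-variance volume argument (maximize $\det\Sigma_{F'}$ subject to $\Sigma_{ii}=c$). Finally it simply asserts that $\mathcal{L}_{\text{div}}$ is a proxy for TC.

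\textbf{Your route.} You apply Hadamard to $M=I+\alpha\Sigma_{F'}$, with $\Sigma_{F'}$ read as the token Gram matrix on which $\mathcal{L}_{\text{div}}$ is computed. This proves the claim for the loss that is actually optimized, with no Gaussian assumption and no proxy step. The off-diagonal entries are then literally the inner products $\langle f'_i,f'_j\rangle$. You are also right that the TCR is unbounded, so the ``iff'' in the first sentence holds only under the energy constraint. The paper's unconstrained ``iff'' is really a statement about TC, which is invariant to rescaling the tokens. The Sylvester reduction $\det(I_d+\alpha F'^\top F')=\det(I_k+\alpha F'F'^\top)$, the trace-constrained optimum $(c/k)I$, and the near-orthogonality quantification are all things the paper's argument lacks.

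\textbf{What each route buys.} The paper's route gives the information-theoretic reading of ``redundancy minimized.'' You can recover it inside your framework: $\tfrac12\bigl(\sum_i\log M_{ii}-\log\det M\bigr)$, your Hadamard gap, is exactly the Gaussian total correlation of a vector with covariance $M$.

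\textbf{Two small points to tighten.}

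First, the ``only if'' in Step 1 and the conclusion $\Sigma_{F'}=(c/k)I$ in Step 2 both require $d\ge k$. Otherwise no diagonal configuration with positive energies exists. This holds in the paper's setting, where the token dimension far exceeds $k$, but it should be stated as a hypothesis.

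Second, your second-order expansion is only local, so it does not by itself show that near-optimal configurations are near-orthogonal. Use the global bound instead. Let $\mu_j$ be the eigenvalues of $E$. Since $\operatorname{tr}E=0$,
\[
-\log\det(I+E)=\sum_j\bigl(\mu_j-\log(1+\mu_j)\bigr)\ge \|E\|_F^2/(2k).
\]
This follows from $x-\log(1+x)\ge x^2/\bigl(2\max(1,1+x)\bigr)$ for $x>-1$. It also uses the fact that $I+E$ is positive definite with unit diagonal, so its eigenvalues are at most $k$.
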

\begin{proof}
    See Supplemental Material \ref{proof:orthogonality}. 
\end{proof}

Geometrically, optimizing $\mathcal{L}_{\text{div}}$ under a fixed energy constraint (e.g., normalization of tokens) forces the tokens to become pairwise orthogonal, i.e., $\langle f'_i, f'_j \rangle \approx 0$ for $i \neq j$.

\begin{proposition}[Approximation Error and Mode Coverage]
\label{prop:mode-collapse}
    Let $X \in \mathbb{R}^D$ be a centered random vector with covariance matrix $\Sigma_X$. Let $\{\lambda_j, v_j\}_{j=1}^D$ be the eigenvalues and eigenvectors of $\Sigma_X$, ordered such that $\lambda_1 \ge \dots \ge \lambda_D \ge 0$. Let $m$ be the effective rank of the data manifold such that $\lambda_j \approx 0$ for $j > m$. A single-token bottleneck (such as single class token) ($k=1$) incurs a mean squared approximation error lower-bounded by $\sum_{j=2}^D \lambda_j$. For a $k$-token bottleneck ($k < m$), explicitly penalizing token correlation via $\mathcal{L}_{\text{div}}$ (enforcing orthogonality) reduces the approximation error lower bound to $\sum_{j=k+1}^D \lambda_j$.
\end{proposition}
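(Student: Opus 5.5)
The plan is to reduce the proposition to the classical optimal low-rank approximation result (Eckart--Young--Mirsky / PCA). I will model a $k$-token bottleneck as a \emph{linear} encoder--decoder pair: the encoder produces token coordinates $z = W^\top X$ with $W \in \mathbb{R}^{D\times k}$, and the decoder returns a reconstruction $\hat X = A z$. Every such reconstruction lies in a subspace $S$ with $\dim S \le k$. For a fixed subspace, the best reconstruction is the orthogonal projection $P_S X$, because it minimizes $\|X - \hat X\|$ pointwise among vectors of $S$. This gives the starting identity
\[
\mathbb{E}\|X - \hat X\|^2 \;\ge\; \mathbb{E}\|X - P_S X\|^2 \;=\; \operatorname{tr}(\Sigma_X) - \operatorname{tr}(P_S \Sigma_X).
\]

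\textbf{Step 1 (general lower bound).} I will invoke Ky Fan's maximum principle: $\max_{\dim S = r} \operatorname{tr}(P_S \Sigma_X) = \sum_{j\le r} \lambda_j$, attained at $S = \mathrm{span}(v_1,\dots,v_r)$. Hence any bottleneck of rank $r$ incurs error at least $\sum_{j>r}\lambda_j$.
\begin{itemize}
\item For $k=1$ the rank is $r\le 1$, which gives the claimed bound $\sum_{j\ge 2}\lambda_j$.
\item For general $k$ the rank is $r \le k$, which gives $\sum_{j>k}\lambda_j$.
\end{itemize}

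\textbf{Step 2 (role of $\mathcal{L}_{\text{div}}$).} The $k$-token bound is only attainable if the tokens actually span $k$ dimensions. Without a diversity penalty, feature collapse can occur: tokens become colinear, $\operatorname{rank}(W) = r < k$, and the bound degrades to $\sum_{j>r}\lambda_j$. This is strictly larger than $\sum_{j>k}\lambda_j$ whenever $\lambda_{r+1},\dots,\lambda_k > 0$, which holds because $k<m$. I will then use Lemma~\ref{lemma:orthogonality}: under a fixed energy constraint, minimizing $\mathcal{L}_{\text{div}}$ forces $\Sigma_{F'}$ to be diagonal, so the tokens are pairwise orthogonal and $\operatorname{rank} = k$. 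Concretely, $\log\det(I + cF'^\top F')$ tends to a finite-penalty degenerate value as any singular value goes to $0$. Combining this with Step 1, the achievable lower bound drops to $\sum_{j=k+1}^D\lambda_j$. It is attained, for example, by orthonormal tokens aligned with $v_1,\dots,v_k$, which are simultaneously decorrelated and reconstruction-optimal. Since $\lambda_j\approx 0$ for $j>m$, this is effectively $\sum_{j=k+1}^{m}\lambda_j$.

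\textbf{Main obstacle.} The difficulty is making ``reduces the lower bound'' precise for the actual nonlinear attention encoder. The bound in Step 1 is only a PCA-type bound for linear or projective reconstructions. I would handle this by stating the result for the linearized (first-order) regime, or by interpreting the $k$ tokens' spanning subspace as the effective code. I would also argue that the diversity term only guarantees full rank, which is what enables the lower bound; it does not by itself guarantee alignment with the top eigenvectors. Alignment comes from the reconstruction term, so I would phrase the conclusion as: the joint minimizer attains $\sum_{j>k}\lambda_j$.
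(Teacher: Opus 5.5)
Your proposal is correct and follows essentially the paper's route: you model the bottleneck linearly, use the trace decomposition $\operatorname{tr}(\Sigma_X)-\operatorname{tr}(P_S\Sigma_X)$, apply a PCA-type optimality result (Ky Fan's principle in place of the paper's Rayleigh-quotient and Eckart--Young--Mirsky steps), and invoke Lemma~\ref{lemma:orthogonality} to rule out token collapse. Your version is somewhat more careful than the paper's: you note that $\sum_{j>k}\lambda_j$ already bounds any rank-$\le k$ linear bottleneck, that $\mathcal{L}_{\text{div}}$ only makes this bound attainable by preventing rank collapse, and that alignment with $v_1,\dots,v_k$ comes from the reconstruction term.
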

\begin{proof}
    See Supplemental Material \ref{proof:mode-collapse}.
\end{proof}

\begin{remark}
    This proposition provides the linear theoretical motivation for our Fingerprint view. While our implementation uses a non-linear deep decoder, this linear bound illustrates the fundamental geometric intuition: by ensuring the approximation error is bounded by the tail energy $\sum_{j=k+1}^D \lambda_j$, we minimize the risk of mode collapse. This geometric constraint biases the model to cover diverse spectral components rather than repeatedly encoding the dominant signal, potentially capturing lower-variance clinical features (like subtle arrhythmias).
\end{remark}

\subsection{Downstream Clinical Inference}
We posit that a clinically useful representation must be not only sufficient but also \textbf{selective}. In medical diagnostics, pathology is often driven by sparse biomarkers (e.g., a specific arrhythmia segment) rather than global signal statistics. To model this, we formulate the downstream classification task via attention pooling mechanism that aggregates the learned Fingerprint Tokens:
\begin{equation}
    z = \sum_{i=1}^{k} \alpha_i f'_i, \quad \text{where } \alpha_i = \text{Softmax}(\langle q_{\text{task}}, f'_i \rangle),
\end{equation}
where $q_{\text{task}}$ are task specific linear layer. This formulation serves as a probe: if the representation $F'$ is truly disentangled, the attention weights $\alpha$ should naturally become sparse, selecting only the clinically relevant tokens. We formalize this intuition in Proposition \ref{prop:sample-complexity}, identifying the theoretical conditions under which this sparse recovery is feasible.

\begin{proposition}[Sample Complexity of Sparse Recovery]
\label{prop:sample-complexity}
Let the true labeling function $h^*: \mathbb{R}^k \to \mathcal{Y}$ depend on a subset of factors $S \subset \{1, \dots, k\}$ with cardinality $|S| = s$.
If the representation is entangled (modeled as a random rotation of the ground truth), learning $h^*$ requires a dense hypothesis class. The sample complexity scales linearly with the factor dimension: $m_{\text{ent}} \in \Omega(k)$. If the representation is factorized such that factors are axis-aligned with $S$, the optimal predictor lies in the class of $s$-sparse linear separators. The sample complexity scales logarithmically with the factor dimension: $m_{\text{dis}} \in O(s \log k)$.
Consequently, for pathology detection driven by sparse biomarkers ($s \ll k$), the disentangled representation strictly improves learnability ($m_{\text{dis}} \ll m_{\text{ent}}$) in the low-data regime.

\end{proposition}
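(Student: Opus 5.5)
We will treat the statement as a comparison between two linear-classification settings, and bound the sample complexity of each through the capacity of the relevant hypothesis class. Throughout, we model the downstream predictor as a linear separator $x \mapsto \mathrm{sign}(\langle w, z\rangle)$ on the token-level factor vector $z \in \mathbb{R}^k$. The ground-truth labeling uses $h^*(z)=\mathrm{sign}(\langle w^*, z\rangle)$ with $\mathrm{supp}(w^*)=S$ and $|S|=s$.

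In the disentangled case the observed representation is $z$ itself, so $h^*$ lies in the class $\mathcal{H}_s$ of $s$-sparse separators. The first step is to bound the growth function of $\mathcal{H}_s$. We write $\mathcal{H}_s=\bigcup_{|S'|=s}\mathcal{H}_{S'}$, where each $\mathcal{H}_{S'}$ is a class of halfspaces in $\mathbb{R}^s$ with VC dimension at most $s+1$. Sauer's lemma and a union over the $\binom{k}{s}\le (ek/s)^s$ supports then give
\[
\Pi_{\mathcal{H}_s}(m)\le (ek/s)^s (em/(s+1))^{s+1}.
\]
Hence $\mathrm{VCdim}(\mathcal{H}_s)=O(s\log(k/s))$. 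The standard realizable PAC bound (ERM over $\mathcal{H}_s$) then yields $m_{\text{dis}}=O\big((s\log k+\log(1/\delta))/\varepsilon\big)$, which is $O(s\log k)$ at fixed accuracy and confidence. Alternatively, an $\ell_1$-margin argument with Rademacher complexity $O(\sqrt{\log k/m})$ gives the same rate for the attention/Lasso-style probe.

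In the entangled case the representation is $\tilde z = Rz$ with $R$ drawn Haar-uniformly from $O(k)$. The target becomes $\mathrm{sign}(\langle R w^*, \tilde z\rangle)$. Since $Rw^*/\|w^*\|$ is uniform on the sphere, the learner cannot restrict itself to any proper sparse subclass without failing on most rotations. The natural class is therefore all halfspaces in $\mathbb{R}^k$, which has VC dimension $k+1$.

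For the lower bound we will use a minimax/Bayesian version of the standard VC lower bound. The first step is to fix a shattered set: we take $k$ points $\tilde z_i = e_i$ in the rotated coordinates, which pull back to $z_i=R^\top e_i$. The key step is to show that, averaged over $R$, every labeling of these points is realized by some rotated $s$-sparse target with non-negligible probability. Equivalently, the induced prior over labelings is close enough to uniform that the usual adversarial argument applies. That argument says that with fewer than $ck$ samples the learner has error at least $\varepsilon$ with constant probability, so $m_{\text{ent}}=\Omega(k)$.

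The hard part is justifying this nearly uniform prior. Even for $s=1$, the direction $Re_j$ is uniform on the sphere, and its sign pattern on the basis $\{e_i\}$ is uniform over $\{\pm1\}^k$ by symmetry. This symmetry is exactly what we need. We will prove that case first and extend it to general $s$ by conditioning on the support. The final step is to combine the two bounds: $s\log k\ll k$ whenever $s\ll k/\log k$, which gives $m_{\text{dis}}\ll m_{\text{ent}}$ in the low-data regime.
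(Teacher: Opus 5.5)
Your argument is correct. On the disentangled side it is the paper's argument made explicit: the paper just asserts $\mathrm{VC}(\mathcal{H}_{\text{sparse}})=O(s\log(ek/s))$ and plugs it into the fundamental theorem, while you derive that bound from Sauer's lemma and a union over the $\binom{k}{s}$ supports.

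The entangled lower bound is where you genuinely differ. The paper's chain of reasoning is: $w'=U^{-T}w^*$ is dense, so sparse hypotheses have large bias, so the learner must use all halfspaces. It then reads $\Omega(k/\epsilon)$ off their VC dimension $k+1$. That lower-bounds a choice of hypothesis class, not the learning problem. VC dimension is rotation invariant, so for a fixed, known $U$ the target class $\{\mathrm{sign}(\langle Uw,\cdot\rangle):\|w\|_0\le s\}$ still has VC dimension $O(s\log k)$. The $\Omega(k)$ rate can hold only because $U$ is unknown to the learner.

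Your Bayesian no-free-lunch construction makes this explicit and rigorous. Haar measure is invariant under coordinate sign flips, so the labels of $e_1,\dots,e_k$ are i.i.d. fair coins. The input law in the observed coordinates does not depend on $R$. With $m<k/2$ samples, more than half the mass sits on unseen points whose labels are independent of the sample. Hence every learner, proper or improper, has expected error above $1/4$.

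The paper's route is shorter and carries a $1/\epsilon$ factor in both bounds. Yours is a genuine information-theoretic lower bound at constant accuracy, which is all the statement claims; the usual heavy-point variant of the same construction would restore the $1/\epsilon$. Your closing condition $s\ll k/\log k$ is also more precise than the paper's fixed-$s$, $k\to\infty$ comparison.

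A few small fixes:
\begin{itemize}
\item The step you call the hard part is immediate. For any fixed $w^*\neq 0$, the vector $Rw^*/\|w^*\|$ is already uniform on the sphere, as you note yourself, so no separate $s=1$ case or conditioning on the support is needed.
\item What the argument uses is that the induced labeling is exactly uniform (each pattern has probability $2^{-k}$), not that each pattern has non-negligible probability.
\item The VC bound should read $O(s\log(ek/s))$ rather than $O(s\log(k/s))$.
\item Plain ERM picks up an extra $\log(1/\epsilon)$ factor.
\end{itemize}
The last two do not matter at fixed $\epsilon,\delta$.
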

\begin{proof}
    See Supplemental Material \ref{proof:sample-complexity}.
\end{proof}

Beyond learnability, clinical deployment demands stability. Physiological signals are inherently noisy. Thus a trustworthy representation must distinguish between measurement noise and pathological shifts. We analyze this property by modeling the sensitivity of the predictor to local perturbations in the latent space. In Theorem \ref{theorem:robustness}, we establish a direct link between the geometry of the latent space (specifically, the orthogonality we enforce) and the worst-case robustness of the classifier.

\begin{theorem}[Robustness to Uncorrelated Noise]
\label{theorem:robustness}
Let $F \in \mathbb{R}^{d \times k}$ be a fixed latent representation consisting of $k$ standardized tokens ($||f_i||_2 = 1$) with Gram matrix $G = F^T F$. Consider a linear readout vector $w \in \mathbb{R}^d$ required to produce a target activation pattern $y \in \mathbb{R}^k$ on the tokens (i.e., satisfying the constraint $F^T w = y$).

For isotropic input noise $\eta \sim \mathcal{N}(0, \sigma^2 I_d)$, the Noise Sensitivity is $\mathcal{S} = \sigma^2 ||w||_2^2$. The worst-case sensitivity over all unit-norm target patterns ($||y||_2=1$) using the minimum-norm readout $w$ is lower-bounded by the spectral properties of $G$:
\begin{align}
    \max_{||y||=1} \mathcal{S}(w, F) \ge \frac{\sigma^2}{\lambda_{\min}(G)}
\end{align}
Consequently, this worst-case robustness is strictly maximized if and only if the tokens are pairwise orthogonal ($G=I$).
\end{theorem}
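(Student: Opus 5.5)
We reduce the statement to a spectral computation on the Gram matrix $G=F^\top F$. Assume first that $F$ has full column rank $k\le d$, so $G$ is positive definite. Otherwise $\lambda_{\min}(G)=0$: some target patterns $y$ outside the range of $F^\top$ cannot be realized at all, and the bound holds trivially with the convention that the sensitivity is $+\infty$.

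\textbf{Step 1: the minimum-norm readout.} Among all $w$ with $F^\top w=y$, the minimum-norm solution is
\[
w^\star = F(F^\top F)^{-1}y = FG^{-1}y.
\]
This follows from the standard least-norm argument: $w^\star$ lies in the range of $F$, which is orthogonal to $\ker F^\top$, so any feasible $w$ decomposes as $w^\star+u$ with $u\in\ker F^\top$, and then $\|w\|^2=\|w^\star\|^2+\|u\|^2$. Its squared norm is
\[
\|w^\star\|_2^2 = y^\top G^{-1}F^\top F G^{-1} y = y^\top G^{-1} y .
\]
Hence $\mathcal{S}(w^\star,F)=\sigma^2\, y^\top G^{-1}y$.

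\textbf{Step 2: the worst case over unit-norm targets.} By the Rayleigh quotient,
\[
\max_{\|y\|=1} y^\top G^{-1}y=\lambda_{\max}(G^{-1})=\frac{1}{\lambda_{\min}(G)},
\]
with the maximum attained at the eigenvector of $\lambda_{\min}(G)$. This gives the claimed bound, in fact with equality, which is stronger than the stated inequality.

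\textbf{Step 3: optimality of orthogonal tokens.} The constraint $\|f_i\|_2=1$ gives $\operatorname{tr}(G)=\sum_i G_{ii}=k$. The eigenvalues of $G$ are therefore nonnegative and average to $1$, so $\lambda_{\min}(G)\le 1$. Equality holds iff all eigenvalues equal $1$. Because $G$ is symmetric, this happens iff $G=I$, i.e.\ iff the tokens are pairwise orthogonal. Consequently the worst-case sensitivity $\sigma^2/\lambda_{\min}(G)$ is at least $\sigma^2$, and it equals its minimum $\sigma^2$ (maximal robustness) exactly when $G=I$. The ``strictly'' part follows because any $G\neq I$ with unit diagonal has an eigenvalue strictly below $1$.

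The main obstacle is not computational but interpretive. The statement as written has $\max_y\mathcal S\ge\sigma^2/\lambda_{\min}$ while also using the min-norm readout, and we treat that as an equality. The degenerate rank-deficient case ($k>d$, or collinear tokens) requires the convention above. We also note that orthogonality requires $k\le d$, and we state this as a standing assumption for the ``if and only if''.
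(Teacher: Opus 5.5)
Your proposal is correct and follows essentially the same route as the paper. Both use the pseudo-inverse min-norm readout $w = FG^{-1}y$ with $\|w\|_2^2 = y^\top G^{-1} y$, then the Rayleigh quotient to get $\sigma^2/\lambda_{\min}(G)$, then the $\operatorname{tr}(G)=k$ argument to show that $G=I$ is the unique optimum. Your extra details are small refinements the paper leaves implicit: the Pythagorean justification of the least-norm solution, the explicit convention for the rank-deficient case, the observation that the bound holds with equality, and the remark that a symmetric $G$ with all eigenvalues $1$ must equal $I$.
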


\begin{proof}
    See Supplemental Material \ref{proof:robustness}. 
\end{proof}

\section{Experiments}

We evaluate TS-Fingerprint on five mainstream physiological signal analysis tasks and two human activity recognition tasks to verify the generality of our method. The objective of our experiments is twofold: First, to demonstrate the \textit{superior classification performance} of our redundancy-constrained representation against extensive state-of-the-art baselines and second, to validate the \textit{clinical interpretability} and mechanistic disentanglement properties of the learned Fingerprint tokens, verifying that our method offers a better balance between accuracy and explainability than holistic embeddings.

\paragraph{Baselines} Since we attempt to propose a foundation representation for medical time series, we extensively include the latest and advanced models in the community as basic baselines. These include general time series transformers: Autoformer~\cite{wu2021autoformer}, Crossformer~\cite{zhang2023crossformer}, FEDformer~\cite{zhou2022fedformer}, Informer~\cite{zhou2021informer}, MTST~\cite{zhang2024multi}, iTransformer~\cite{liu2023itransformer}, Nonformer~\cite{liu2022non}, PatchTST~\cite{nie2022time}, Reformer~\cite{kitaev2020reformer}, Transformer~\cite{vaswani2017attention}; specialized MedTS transformers: Medformer~\cite{wang2024medformer}. We also compare against state-of-the-art Self-Supervised frameworks including Ti-MAE (Generative) \cite{li2023ti} and SimMTM (Generative + Contrastive) \cite{dong2023simmtm} to validate our disentanglement objective.

 \begin{table*}[htbp]
    \centering
    \resizebox{.95\linewidth}{!}{
    \begin{tabular}{clcccccccccccc}
        \toprule
        \textbf{Dataset} &
        \textbf{Metrics} &
        \textbf{Autoformer} &
        \textbf{Crossformer} &
        \textbf{FEDformer} &
        \textbf{Informer} &
        \textbf{iTransformer} &
        \textbf{MTST} &
        \textbf{Nonformer} &
        \textbf{PatchTST} &
        \textbf{Reformer} &
        \textbf{Transformer} &
        \textbf{Medformer} &
        \textbf{Ours} \\
        \midrule

\multirow{5}{*}{\shortstack{\textbf{ADFTD} \\ \textbf{(3-Classes)}}}
& Accuracy (\%) &
45.25 $\pm$ 1.48 & 50.45 $\pm$ 2.31 & 46.30 $\pm$ 0.59 & 48.45 $\pm$ 1.96 &
52.60 $\pm$ 1.59 & 45.60 $\pm$ 2.03 & 49.95 $\pm$ 1.05 & 44.37 $\pm$ 0.95 &
50.78 $\pm$ 1.17 & 50.47 $\pm$ 2.14 &
\textcolor{blue}{\textbf{53.27 $\pm$ 1.54}} &
\textcolor{red}{\textbf{53.91 $\pm$ 0.54}} \\

& Precision (\%) &
43.67 $\pm$ 1.94 & 45.57 $\pm$ 1.63 & 46.05 $\pm$ 0.76 & 46.54 $\pm$ 1.68 &
46.79 $\pm$ 1.27 & 44.70 $\pm$ 1.33 & 47.71 $\pm$ 0.97 & 42.40 $\pm$ 1.13 &
49.64 $\pm$ 1.49 & 49.13 $\pm$ 1.83 &
\textcolor{blue}{\textbf{51.02 $\pm$ 1.57}} &
\textcolor{red}{\textbf{51.89 $\pm$ 0.49}} \\

& Recall (\%) &
42.96 $\pm$ 2.03 & 45.88 $\pm$ 1.82 & 44.22 $\pm$ 1.38 & 46.06 $\pm$ 1.84 &
47.28 $\pm$ 1.29 & 45.05 $\pm$ 1.30 & 47.46 $\pm$ 1.50 & 42.06 $\pm$ 1.48 &
49.89 $\pm$ 1.67 & 48.01 $\pm$ 1.53 &
\textcolor{blue}{\textbf{50.71 $\pm$ 1.55}} &
\textcolor{red}{\textbf{52.10 $\pm$ 1.09}} \\

& F1-score &
42.59 $\pm$ 1.85 & 45.50 $\pm$ 1.70 & 43.91 $\pm$ 1.37 & 45.74 $\pm$ 1.38 &
46.79 $\pm$ 1.13 & 44.31 $\pm$ 1.74 & 46.96 $\pm$ 1.35 & 41.97 $\pm$ 1.37 &
47.94 $\pm$ 0.69 & 48.09 $\pm$ 1.59 &
\textcolor{blue}{\textbf{50.65 $\pm$ 1.51}} &
\textcolor{red}{\textbf{51.79 $\pm$ 0.74}} \\

& AUROC &
61.02 $\pm$ 1.82 & 66.45 $\pm$ 2.03 & 62.62 $\pm$ 1.75 & 65.87 $\pm$ 1.27 &
67.26 $\pm$ 1.16 & 62.50 $\pm$ 0.81 & 66.23 $\pm$ 1.37 & 60.08 $\pm$ 1.50 &
69.17 $\pm$ 1.58 & 67.93 $\pm$ 1.59 &
\textcolor{blue}{\textbf{70.93 $\pm$ 1.19}} &
\textcolor{red}{\textbf{72.11 $\pm$ 0.73}} \\
\midrule

\multirow{5}{*}{\shortstack{\textbf{PTB} \\ \textbf{(2-Classes)}}}
& Accuracy (\%) &
73.35 $\pm$ 2.10 & 80.17 $\pm$ 3.79 & 76.05 $\pm$ 2.54 & 78.69 $\pm$ 1.68 &
\textcolor{blue}{\textbf{83.89 $\pm$ 0.71}} & 76.59 $\pm$ 1.90 & 78.66 $\pm$ 0.49 & 74.74 $\pm$ 1.62 &
77.96 $\pm$ 2.13 & 77.37 $\pm$ 1.02 &
83.50 $\pm$ 2.01 &
\textcolor{red}{\textbf{85.39 $\pm$ 0.63}}\\

& Precision (\%) &
72.11 $\pm$ 2.89 & 85.04 $\pm$ 1.83 & 77.58 $\pm$ 3.61 & 82.87 $\pm$ 1.02 &
\textcolor{red}{\textbf{88.25 $\pm$ 1.18}} &
79.88 $\pm$ 1.90 & 82.77 $\pm$ 0.86 & 76.94 $\pm$ 1.51 &
81.72 $\pm$ 1.61 & 81.84 $\pm$ 0.66 &
85.19 $\pm$ 0.94 &
\textcolor{blue}{\textbf{86.01 $\pm$ 0.72}}\\

& Recall (\%) &
63.24 $\pm$ 3.17 & 71.25 $\pm$ 6.29 & 66.10 $\pm$ 3.55 & 69.19 $\pm$ 2.90 &
76.39 $\pm$ 1.01 & 66.31 $\pm$ 2.95 & 69.12 $\pm$ 0.87 & 63.89 $\pm$ 2.71 &
68.20 $\pm$ 3.35 & 67.14 $\pm$ 1.80 &
\textcolor{blue}{\textbf{77.11 $\pm$ 3.39}} &
\textcolor{red}{\textbf{80.87 $\pm$ 0.91}}\\

& F1-score &
63.69 $\pm$ 3.84 & 72.75 $\pm$ 7.19 & 67.14 $\pm$ 4.37 & 70.84 $\pm$ 3.47 &
79.06 $\pm$ 1.06 & 67.38 $\pm$ 3.71 & 70.90 $\pm$ 1.00 & 64.36 $\pm$ 3.38 &
69.65 $\pm$ 3.88 & 68.47 $\pm$ 2.19 &
\textcolor{blue}{\textbf{79.18 $\pm$ 3.31}} &
\textcolor{red}{\textbf{82.41 $\pm$ 1.11}}\\

& AUROC &
78.54 $\pm$ 3.48 & 88.55 $\pm$ 3.45 & 85.93 $\pm$ 4.31 & 92.09 $\pm$ 0.53 &
91.18 $\pm$ 1.16 & 86.86 $\pm$ 2.75 & 89.37 $\pm$ 2.51 & 88.79 $\pm$ 0.91 &
91.13 $\pm$ 0.74 & 90.08 $\pm$ 1.76 &
\textcolor{blue}{\textbf{92.81 $\pm$ 1.48}} &
\textcolor{red}{\textbf{92.99 $\pm$ 0.98}}\\
\midrule

\multirow{5}{*}{\shortstack{\textbf{PTB-XL} \\ \textbf{(5-Classes)}}}
& Accuracy (\%) &
61.68 $\pm$ 2.72 &
73.30 $\pm$ 0.14 &
57.20 $\pm$ 9.47 &
71.43 $\pm$ 0.32 &
69.28 $\pm$ 0.22 &
72.14 $\pm$ 0.27 &
70.56 $\pm$ 0.55 &
73.23 $\pm$ 0.25 &
71.72 $\pm$ 0.43 &
70.59 $\pm$ 0.44 &
72.87 $\pm$ 0.23 &
\textcolor{red}{\textbf{73.98 $\pm$ 0.21}} \\

& Precision (\%) &
51.60 $\pm$ 1.64 &
65.06 $\pm$ 0.35 &
52.38 $\pm$ 6.09 &
62.64 $\pm$ 0.60 &
59.59 $\pm$ 0.45 &
63.84 $\pm$ 0.72 &
61.57 $\pm$ 0.66 &
65.70 $\pm$ 0.64 &
63.12 $\pm$ 1.02 &
61.57 $\pm$ 0.65 &
64.14 $\pm$ 0.42 &
\textcolor{red}{\textbf{66.45 $\pm$ 0.38}} \\

& Recall (\%) &
49.10 $\pm$ 1.52 &
\textcolor{blue}{\textbf{61.23 $\pm$ 0.33}} &
49.04 $\pm$ 7.26 &
59.12 $\pm$ 0.47 &
54.62 $\pm$ 0.18 &
60.01 $\pm$ 0.81 &
57.75 $\pm$ 0.72 &
60.82 $\pm$ 0.76 &
59.20 $\pm$ 0.75 &
57.62 $\pm$ 0.35 &
60.60 $\pm$ 0.46 &
\textcolor{red}{\textbf{62.21 $\pm$ 0.21}} \\

& F1-score &
48.85 $\pm$ 2.27 &
62.59 $\pm$ 0.14 &
47.89 $\pm$ 8.44 &
60.44 $\pm$ 0.43 &
56.20 $\pm$ 0.19 &
61.43 $\pm$ 0.38 &
59.10 $\pm$ 0.66 &
\textcolor{blue}{\textbf{62.61 $\pm$ 0.34}} &
60.69 $\pm$ 0.18 &
59.05 $\pm$ 0.25 &
62.02 $\pm$ 0.37 &
\textcolor{red}{\textbf{63.51 $\pm$ 0.34}} \\

& AUROC &
82.04 $\pm$ 1.44 &
90.02 $\pm$ 0.06 &
82.13 $\pm$ 4.17 &
88.65 $\pm$ 0.09 &
86.71 $\pm$ 0.10 &
88.97 $\pm$ 0.33 &
88.32 $\pm$ 0.36 &
89.74 $\pm$ 0.19 &
88.80 $\pm$ 0.24 &
88.21 $\pm$ 0.16 &
89.66 $\pm$ 0.13 &
\textcolor{red}{\textbf{90.41 $\pm$ 0.11}} \\
\bottomrule
\multirow{6}{*}{\shortstack{\textbf{APAVA} \\ \textbf{(2-Classes)}}}
& Accuracy (\%) &
68.64 $\pm$ 1.82 &
73.77 $\pm$ 1.95 &
74.94 $\pm$ 2.15 &
73.11 $\pm$ 4.40 &
74.55 $\pm$ 1.66 &
71.14 $\pm$ 1.59 &
71.89 $\pm$ 3.81 &
67.03 $\pm$ 1.65 &
78.70 $\pm$ 2.00 &
76.30 $\pm$ 4.72 &
78.74 $\pm$ 0.64 &
\textcolor{blue}{\textbf{80.29 $\pm$ 0.32}} \\ 

& Precision (\%) &
68.48 $\pm$ 2.10 &
79.29 $\pm$ 4.36 &
74.59 $\pm$ 1.50 &
75.17 $\pm$ 6.06 &
74.77 $\pm$ 2.10 &
79.30 $\pm$ 0.97 &
71.80 $\pm$ 4.58 &
78.76 $\pm$ 1.28 &
\textcolor{red}{\textbf{82.50 $\pm$ 3.95}} &
77.64 $\pm$ 5.95 &
81.11 $\pm$ 0.84 &
\textcolor{blue}{\textbf{81.62 $\pm$ 0.44}} \\

& Recall (\%) &
68.77 $\pm$ 2.27 &
68.86 $\pm$ 1.70 &
73.56 $\pm$ 3.55 &
69.17 $\pm$ 4.56 &
71.76 $\pm$ 1.72 &
65.27 $\pm$ 2.17 &
69.44 $\pm$ 3.56 &
59.91 $\pm$ 2.02 &
75.00 $\pm$ 1.61 &
73.09 $\pm$ 5.01 &
\textcolor{blue}{\textbf{75.40 $\pm$ 0.66}} &
\textcolor{red}{\textbf{77.61 $\pm$ 0.49}} \\

& F1-score &
68.06 $\pm$ 1.94 &
68.93 $\pm$ 1.85 &
73.51 $\pm$ 3.39 &
69.47 $\pm$ 5.06 &
72.30 $\pm$ 1.79 &
67.01 $\pm$ 1.32 &
69.74 $\pm$ 3.84 &
55.97 $\pm$ 3.05 &
75.93 $\pm$ 1.73 &
73.75 $\pm$ 5.38 &
\textcolor{blue}{\textbf{76.31 $\pm$ 0.71}} &
\textcolor{red}{\textbf{78.50 $\pm$ 0.31}} \\

& AUROC &
75.94 $\pm$ 3.61 &
82.39 $\pm$ 3.52 &
83.72 $\pm$ 1.97 &
70.46 $\pm$ 4.91 &
\textcolor{blue}{\textbf{85.59 $\pm$ 1.55}} &
68.87 $\pm$ 2.34 &
70.55 $\pm$ 2.96 &
65.65 $\pm$ 2.08 &
73.94 $\pm$ 1.40 &
73.23 $\pm$ 2.67 &
83.20 $\pm$ 0.91 &
\textcolor{red}{\textbf{86.49 $\pm$ 1.00}} \\
\bottomrule

\multirow{5}{*}{\shortstack{\textbf{SleepEDF} \\ \textbf{(5-Classes)}}}
& Accuracy (\%) &
68.78 $\pm$ 1.23 &   
80.76 $\pm$ 0.91 &   
69.16 $\pm$ 1.14 &   
73.60 $\pm$ 1.02 &   
81.68 $\pm$ 0.74 &   
82.40 $\pm$ 0.83 &   
75.51 $\pm$ 0.96 &   
81.35 $\pm$ 0.87 &   
72.99 $\pm$ 1.18 &   
70.30 $\pm$ 1.27 &   
\textcolor{blue}{\textbf{84.05 $\pm$ 0.76}} &   
\textcolor{red}{\textbf{84.18 $\pm$ 0.72}} \\ 

& Precision (\%) &
64.53 $\pm$ 1.34 &   
71.73 $\pm$ 1.02 &   
62.25 $\pm$ 1.29 &   
61.83 $\pm$ 1.21 &   
72.21 $\pm$ 0.88 &   
\textcolor{blue}{\textbf{73.52 $\pm$ 0.85}} &   
67.49 $\pm$ 1.03 &   
71.89 $\pm$ 0.97 &   
69.61 $\pm$ 1.11 &   
60.87 $\pm$ 1.38 &   
72.84 $\pm$ 0.79 &   
\textcolor{red}{\textbf{74.55 $\pm$ 0.71}} \\ 

& Recall (\%) &
57.20 $\pm$ 1.41 &   
72.27 $\pm$ 1.05 &   
58.16 $\pm$ 1.33 &   
61.59 $\pm$ 1.26 &   
72.83 $\pm$ 0.86 &   
73.30 $\pm$ 0.89 &   
64.81 $\pm$ 1.08 &   
70.36 $\pm$ 0.99 &   
59.21 $\pm$ 1.30 &   
61.14 $\pm$ 1.24 &   
\textcolor{blue}{\textbf{73.42 $\pm$ 0.82}} &   
\textcolor{red}{\textbf{74.36 $\pm$ 0.75}} \\  

& F1-score (\%) &
59.42 $\pm$ 1.36 &   
71.85 $\pm$ 1.01 &   
59.10 $\pm$ 1.31 &   
60.96 $\pm$ 1.22 &   
72.36 $\pm$ 0.82 &   
\textcolor{blue}{\textbf{73.16 $\pm$ 0.86}} &   
63.55 $\pm$ 1.04 &   
70.63 $\pm$ 0.93 &   
61.92 $\pm$ 1.19 &   
60.41 $\pm$ 1.28 &   
71.49 $\pm$ 0.84 &   
\textcolor{red}{\textbf{74.16 $\pm$ 0.70}} \\  

& AUROC (\%) &
88.38 $\pm$ 0.94 &   
94.21 $\pm$ 0.63 &   
87.82 $\pm$ 0.97 &   
89.14 $\pm$ 0.88 &   
93.81 $\pm$ 0.54 &   
94.56 $\pm$ 0.51 &   
90.57 $\pm$ 0.79 &   
94.42 $\pm$ 0.58 &   
92.03 $\pm$ 0.71 &   
87.82 $\pm$ 0.92 &   
\textcolor{blue}{\textbf{95.23 $\pm$ 0.52}} &   
\textcolor{red}{\textbf{95.37 $\pm$ 0.49}} \\ 
\bottomrule

\multirow{5}{*}{\shortstack{\textbf{FLAAP} \\ \textbf{(10-Classes)}}}
& Accuracy (\%) &
71.66 $\pm$ 1.43 & 
75.84 $\pm$ 0.52 & 
66.05 $\pm$ 1.37 & 
73.72 $\pm$ 0.71 & 
71.77 $\pm$ 2.25 & 
71.58 $\pm$ 1.23 & 
\textcolor{blue}{\textbf{76.68 $\pm$ 1.46}} & 
56.46 $\pm$ 4.21 & 
71.65 $\pm$ 1.27 & 
74.96 $\pm$ 1.25 & 
76.44 $\pm$ 0.64 & 
\textcolor{red}{\textbf{77.59 $\pm$ 0.83}} \\ 

& Precision (\%) &
70.93 $\pm$ 1.54 & 
74.99 $\pm$ 0.94 & 
64.50 $\pm$ 1.03 & 
72.10 $\pm$ 2.37 & 
71.74 $\pm$ 1.54 & 
70.90 $\pm$ 4.21 & 
\textcolor{blue}{\textbf{76.49 $\pm$ 3.81}} & 
51.14 $\pm$ 3.19 & 
73.99 $\pm$ 1.02 & 
74.22 $\pm$ 1.57 & 
75.76 $\pm$ 0.75 & 
\textcolor{red}{\textbf{76.83 $\pm$ 0.77}} \\ 

& Recall (\%) &
71.12 $\pm$ 2.19 & 
74.49 $\pm$ 1.17 & 
65.26 $\pm$ 2.24 & 
70.24 $\pm$ 0.56 & 
70.51 $\pm$ 0.73 & 
69.94 $\pm$ 2.38 & 
\textcolor{blue}{\textbf{76.74 $\pm$ 1.49}} & 
51.74 $\pm$ 2.43 & 
73.83 $\pm$ 0.85 & 
73.97 $\pm$ 0.97 & 
75.10 $\pm$ 0.81 & 
\textcolor{red}{\textbf{76.76  $\pm$ 1.02}} \\ 

& F1-score &
73.22 $\pm$ 1.89 & 
75.52 $\pm$ 0.66 & 
63.59 $\pm$ 1.70 & 
72.32 $\pm$ 1.69 & 
70.96 $\pm$ 0.91 & 
71.15 $\pm$ 0.94 & 
76.14 $\pm$ 0.89 & 
53.98 $\pm$ 2.82 & 
71.14 $\pm$ 1.45 & 
74.49 $\pm$ 1.39 & 
\textcolor{blue}{\textbf{76.25 $\pm$ 0.65}} & 
\textcolor{red}{\textbf{77.48 $\pm$ 0.89}} \\ 

& AUROC &
96.17 $\pm$ 0.97 & 
96.81 $\pm$ 0.44 & 
93.20 $\pm$ 2.65 & 
96.25 $\pm$ 0.57 & 
95.96 $\pm$ 1.49 & 
95.42 $\pm$ 1.95 & 
\textcolor{blue}{\textbf{96.27 $\pm$ 0.86}} & 
88.96 $\pm$ 1.77 & 
96.91 $\pm$ 0.47 & 
94.40 $\pm$ 1.26 & 
95.84 $\pm$ 1.15 & 
\textcolor{red}{\textbf{96.76 $\pm$ 0.47}} \\ 
\midrule

\multirow{5}{*}{\shortstack{\textbf{UCI-HAR} \\ \textbf{(6-Classes)}}}
& Accuracy (\%) &
82.38 $\pm$ 2.31 & 
89.74 $\pm$ 1.08 & 
90.16 $\pm$ 0.81 & 
90.30 $\pm$ 0.36 & 
84.30 $\pm$ 1.15 & 
89.79 $\pm$ 0.31 & 
90.01 $\pm$ 0.47 & 
87.11 $\pm$ 1.28 & 
88.44 $\pm$ 2.02 & 
88.86 $\pm$ 1.65 & 
\textcolor{red}{\textbf{91.65 $\pm$ 0.74}} & 
\textcolor{blue}{\textbf{90.87 $\pm$ 1.21}} \\ 

& Precision (\%) &
83.94 $\pm$ 1.89 & 
89.36 $\pm$ 0.82 & 
\textcolor{blue}{\textbf{90.96 $\pm$ 0.76 }} & 
90.26 $\pm$ 0.53 & 
84.38 $\pm$ 2.21 & 
89.58 $\pm$ 0.48 & 
90.19 $\pm$ 0.37 & 
87.46 $\pm$ 2.17 & 
88.69 $\pm$ 1.87& 
89.03 $\pm$ 0.72 & 
\textcolor{red}{\textbf{91.89 $\pm$ 0.59}} & 
90.61 $\pm$ 1.01\\ 

& Recall (\%) &
83.29 $\pm$ 1.74 & 
89.32 $\pm$ 0.95 & 
90.50 $\pm$ 0.58 & 
90.31 $\pm$ 0.66 & 
84.25 $\pm$ 0.97 & 
89.22 $\pm$ 0.66 & 
90.14 $\pm$ 0.52 & 
87.05 $\pm$ 0.88 & 
88.44 $\pm$ 2.02 & 
88.86 $\pm$ 1.65 & 
\textcolor{red}{\textbf{91.65 $\pm$ 0.74}} & 
\textcolor{blue}{\textbf{90.87 $\pm$ 1.21}} \\ 

& F1-score &
80.82 $\pm$ 2.04 & 
89.70 $\pm$ 1.10 & 
90.43 $\pm$ 1.02 & 
90.21 $\pm$ 0.79 & 
84.28 $\pm$ 0.76 & 
89.31 $\pm$ 0.29 & 
90.87 $\pm$ 0.19 & 
88.88 $\pm$ 1.26 & 
88.41 $\pm$ 1.98 & 
88.80 $\pm$ 1.67 & 
\textcolor{red}{\textbf{91.61 $\pm$ 0.75}} & 
\textcolor{blue}{\textbf{90.92 $\pm$ 1.19}} \\ 

& AUROC &
94.21 $\pm$ 1.14 & 
98.20 $\pm$ 0.41 & 
98.36 $\pm$ 0.57 & 
98.48 $\pm$ 0.69 & 
96.55 $\pm$ 1.02 & 
98.35 $\pm$ 0.37 & 
97.97 $\pm$ 0.52 & 
98.33 $\pm$ 0.47 & 
98.21 $\pm$ 0.57 & 
98.08 $\pm$ 0.88 & 
\textcolor{red}{\textbf{98.99 $\pm$ 0.34 }} & 
\textcolor{blue}{\textbf{ 98.67 $\pm$ 0.79}} \\ 
\midrule

\end{tabular}}
\caption{
\textbf{Main results on mainstream medical classification benchmarks.}
TS-Fingerprint achieves the \textbf{top-1 average rank} across 5 metrics, surpassing extensive state-of-the-art baselines including medical specialists (e.g., Medformer). Notably, TS-Fingerprint surpasses specialized architectures without requiring complex multi-granularity engineering, validating the superiority of our information-theoretic bottleneck. \textcolor{red}{Red}: best results. \textcolor{blue}{Blue}: second best results. 
}
\label{tab:multi-dataset}
\label{tab:multi-dataset}
\vspace{-22pt}
\end{table*}

\subsection{Evaluations on Representative Medical Datasets}\label{sec:results}

We conduct classification experiments on seven popular real-world medical benchmarks and report 5 evaluation metrics: Accuracy, Precision, Recall, F1-score, and AUROC. We strictly split datasets using a subject-wise protocol to prevent data leakage. All experiments are repeated 5 times based on different seeds, and mean metrics are reported. All experiments are run on 2 NVIDIA RTX 3090 GPUs. For data augmentation, we adopt a frequency masking strategy during reconstruction \cite{zhang2022self}. We use a masking ratio of 0.6 and fingerprint number of 8, which provides a favorable balance between preserving physiologically meaningful spectral structure and introducing sufficient perturbation to reduce overfitting. Our architecture consists of a 6-layer encoder and a 2-layer decoder with hidden dimension of 128, both employing 8-head multi-head attention. We optimize the model using the Adam optimizer, with a learning rate of $1\times 10^{-3}$ for pre-training and $1\times 10^{-4}$ for downstream prediction tasks. We train for up to 100 epochs for both pre-training and downstream tasks, and apply early stopping based on validation F1 performance, with training halted if no improvement is observed for 10 consecutive epochs. The balancing coefficient $\lambda$ between the reconstruction loss and diversity loss is set to $1\times 10^{-4}$.

Table~\ref{tab:multi-dataset} demonstrates the compelling performance of TS-Fingerprint. Concretely, our method achieves the top-1 average rank(average rank:1.24) across all metrics, surpassing 11 baselines. We evaluate our model against the second-best baseline, Medformer, using a one-tailed Wilcoxon signed-rank test. All results are statistically significant ($p < 0.05$), demonstrating the superiority of our proposed model.
It is notable that general-purpose models like PatchTST struggle on complex EEG tasks (ADFTD). We attribute this failure to their Entangled View. By prioritizing global MSE, these architectures smooth over subtle, high-frequency neurodegenerative markers in favor of dominant amplitudes, resulting in insufficient representation of pathology. Furthermore, TS-Fingerprint outperforms specialized architectures like Medformer without requiring complex multi-granularity engineering. This suggests that our Fingerprint View, which forces compression into orthogonal tokens, is inherently more effective at isolating signal from noise than hand-crafted hierarchies and elegant in design.

\vspace{-10pt}
\subsection{Validating the Redundancy-Constrained Objective}
To verify the necessity of our proposed self-supervised pretrain strategy, we conduct a controlled ablation study on ADFTD and PTB-XL. We compare three configurations: Scratch, where the encoder is trained solely via supervised cross-entropy from random initialization, Pretrained($\mathcal{L}_{\mathrm{rec}}$) and Pretrained($\mathcal{L}_{\mathrm{rec}}+\mathcal{L}_{\mathrm{div}}$), where the model is first optimized using different configurations of loss functions. All other hyperparameters are kept identical to ensure a fair comparison. Table~\ref{tab:ablation-pretrain} demonstrates the effectiveness of our design. As expected, the pre-training stage brings a performance promotion on both datasets. Notably, on the challenging ADFTD benchmark, pre-training yields a remarkable +13.1\% relative improvement in F1-score. This indicates that our redundancy-constrained objective successfully initializes the latent space with structured, population-level priors, which is crucial for preventing overfitting in scenarios with high inter-subject heterogeneity. On the larger and more diverse PTB-XL cohort, we observe consistent gains (+8.5\% F1), confirming that our method scales effectively and provides a substantially better starting point than random initialization.

\vspace{-7pt}
\begin{table}[htbp]
\centering
\resizebox{0.45\textwidth}{!}{%
\begin{tabular}{l l c c c}
\toprule
Dataset & Init. & Acc. (\%) & F1  & AUROC \\
\midrule
\multirow{2}{*}{ADFTD} 
    & Scratch     
        & 48.83 & 45.80 & 65.29 \\
    & Pre-trained ($\mathcal{L}_{\textbf{rec}}$) &53.51 & 50.48 & 70.21 \\
    & Pre-trained ($\mathcal{L}_{\textbf{rec}}+\mathcal{L}_{\textbf{div}}$)
        & \textbf{53.91 (+$\Delta$10.4\%)} 
        & \textbf{51.79 (+$\Delta$13.1\%)} 
        & \textbf{72.11 (+$\Delta$10.5\%)} \\
\midrule
\multirow{2}{*}{PTB-XL} 
    & Scratch     
        & 71.29 & 58.52 & 86.73 \\
    & Pre-trained ($\mathcal{L}_{\textbf{rec}}$) & 73.56 & 62.66 & 90.11 \\
    & Pre-trained ($\mathcal{L}_{\textbf{rec}}+\mathcal{L}_{\textbf{div}}$)
        & \textbf{73.98 (+$\Delta$3.8\%)} 
        & \textbf{63.51 (+$\Delta$8.5\%)} 
        & \textbf{90.41 (+$\Delta$4.2\%)} \\
\bottomrule
\end{tabular}
}
\caption{
\textbf{Ablation of the Redundancy-Constrained Pre-training strategy.} $\Delta$  indicate the percentage of performance gain compared with results without pretrain. We compare the performance of TS-Fingerprint trained from \textbf{Scratch} (random initialization) versus \textbf{Pre-trained} (using only reconstruction objective) and \textbf{Pre-trained} (using our reconstruction + diversity objective).
Pre-training yields a consistent performance promotion across both datasets.
}
\label{tab:ablation-pretrain}
\vspace{-20pt}
\end{table}

To verify architectural robustness, we analyze the impact of the number of fingerprint tokens ($k \in \{6, 8, 10\}$) and the masking ratio ($r \in \{0.5, \dots, 0.8\}$) based on PTB-XL. These parameters jointly determine the information bottleneck size and the difficulty of the self-supervised reconstruction task. Table \ref{tab:sensitivity_fp_mask} demonstrates that TS-Fingerprint is highly robust to hyperparameter variations. Concretely, performance peaks at $k=8$. Smaller bottlenecks limit the capacity to capture physiological heterogeneity, while larger sets ($k=10$) introduce redundancy that weakens the disentanglement effect. Regarding the mask ratio, the model remains stable across $0.5 \sim 0.7$ but degrades at 0.8. This indicates that while aggressive masking fosters global context learning, removing excessive signal eliminates the local cues necessary for fine-grained diagnosis. Consequently, TS-Fingerprint offers a wide effective operating region ($k=8, r=0.6$) without requiring extensive tuning.

\vspace{-5pt}
\begin{table}[htbp]
\centering
\resizebox{.85\columnwidth}{!}{
\begin{tabular}{cccccccc}
\toprule
\textbf{FP} & \textbf{Dim} & \textbf{Mask Rate} 
& \textbf{Acc} & \textbf{Precision} & \textbf{Recall} 
& \textbf{F1} & \textbf{AUROC} \\
\midrule
\multicolumn{8}{c}{\textbf{FP=6}} \\
\midrule
6  & 256  & 0.5 & 0.7019 & 0.6263 & 0.5377 & 0.5481 & 0.8727 \\
6  & 256  & 0.6 & 0.7071 & 0.6312 & 0.5469 & 0.5668 & 0.8861 \\
6  & 256  & 0.7 & 0.7063 & 0.6266 & 0.5411 & 0.5541 & 0.8753 \\
6  & 256  & 0.8 & 0.6977 & 0.6163 & 0.5321 & 0.5432 & 0.8708 \\
\midrule
\multicolumn{8}{c}{\textbf{FP=8}} \\
\midrule
8  & 256 & 0.5 & \textbf{0.7407} & \textbf{0.6654} & 0.6102 & 0.6260 & 0.9025 \\
8  & 256 & 0.6 & 0.7398 & 0.6645 & \textbf{0.6221} & \textbf{0.6351} & \textbf{0.9041} \\
8  & 256 & 0.7 & 0.7386 & 0.6635 & 0.6050 & 0.6210 & 0.9004 \\
8  & 256 & 0.8 & 0.7235 & 0.6591 & 0.5750 & 0.5919 & 0.8904 \\
\midrule
\multicolumn{8}{c}{\textbf{FP=10}} \\
\midrule
10 & 256  & 0.5 & 0.6736 & 0.5836 & 0.5053 & 0.5176 & 0.8564 \\
10 & 256  & 0.6 & 0.6641 & 0.5532 & 0.4991 & 0.5070 & 0.8450 \\
10 & 256  & 0.7 & 0.7107 & 0.6488 & 0.5524 & 0.5633 & 0.8785 \\
10 & 256  & 0.8 & 0.7042 & 0.6411 & 0.5440 & 0.5570 & 0.8742 \\
\bottomrule
\end{tabular}
}
\caption{
\textbf{Sensitivity analysis on core hyperparameters.}
We observe that $K=8$ offers an optimal trade-off for the fingerprint token count. Smaller $K$ creates an overly constrictive bottleneck, while $K=10$ weakens the division of labor through redundancy. For the mask ratio ($r$), the model maintains stability until $r \ge 0.8$, where performance declines due to the loss of fine-grained local cues necessary for diagnosis.
}
\label{tab:sensitivity_fp_mask}
\vspace{-20pt}
\end{table}

\subsection{Mechanistic Validation of Disentanglement via Controlled Perturbations}

Standard statistical metrics often fail to reveal how latent factors respond to specific physiological anomalies. To bridge this gap, we construct a controlled synthetic benchmark containing three distinct signal regimes: (i) Amplitude Drops (simulating sensor failure), (ii) Oscillatory Bursts (simulating transient high-frequency activity), and (iii) Clean Baselines. We analyze the functional specialization of our model by visualizing the class-wise attention allocation of fingerprint tokens under these perturbations. More details about the synthetic dataset are in Appendix \ref{app:synthetic_dataset}.

Figure~\ref{fig:disentangle_mechanism} provides direct mechanistic evidence of disentanglement. As shown in the heatmaps, TS-Fingerprint exhibits a sparse and systematic division of labor. Concretely, \textbf{FP0} sharply aligns with amplitude discontinuities, while \textbf{FP1} selectively tracks oscillatory bursts. This behavior stands in sharp contrast to holistic representations, which typically diffuse perturbation information across all latent dimensions (an Entangled View). 

Crucially, this token-level specialization emerges naturally from our redundancy-constrained objective without explicit supervision. While real-world physiological signals involve more complex interactions, this controlled experiment validates the functional capacity of our framework. It demonstrates that the orthogonality constraint is effectively driving the model to isolate independent factors of variation (like local anomalies) rather than memorizing entangled noise, providing a mechanistic basis for the improved robustness observed in clinical benchmarks.

\begin{figure}[t]
    \centering
    \includegraphics[width=0.435\textwidth]{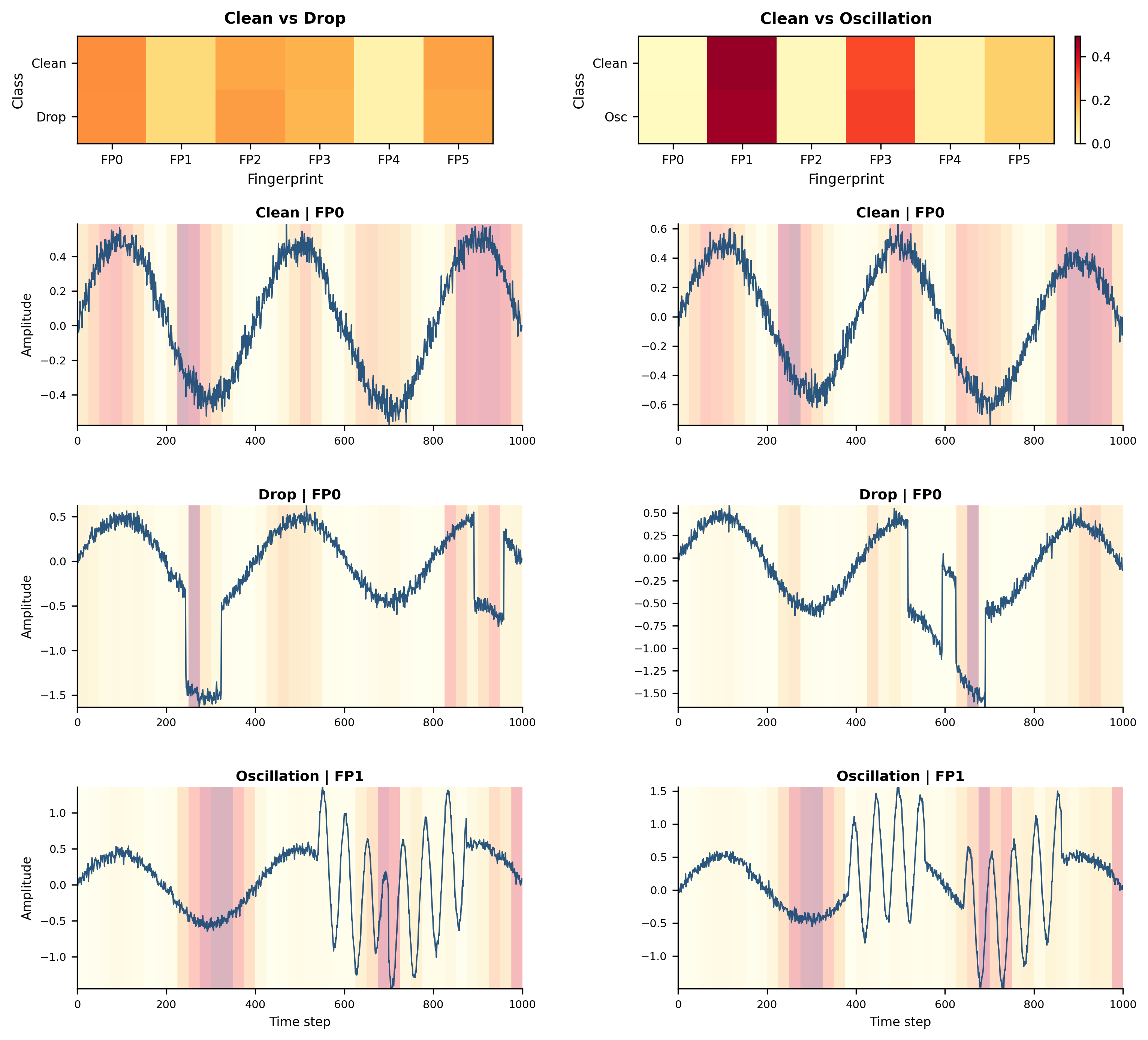}
    \caption{
    \textbf{Mechanistic disentanglement analysis via controlled perturbations.}
    \textbf{Top:} Average attention allocation of fingerprint tokens ($\texttt{FP0--FP5}$) under three signal regimes. The sparse activation patterns indicate that different tokens specialize in distinct signal characteristics.
    \textbf{Bottom:} Token-specific attention heatmaps overlaid on raw signals. Crucially, the model exhibits a clear division of labor: $\texttt{FP0}$ sharply aligns with amplitude drops, while $\texttt{FP1}$ selectively tracks oscillatory bursts. This confirms that TS-Fingerprint effectively disentangles independent factors of variation (local anomalies) from the global signal context, preventing the representational collapse common in holistic models.
    }
    \label{fig:disentangle_mechanism}
    \vspace{-20pt}
\end{figure}

\subsection{Interpretability Analysis: Adaptive Sparsity}
\vspace{-4pt}
\begin{figure}[t]
    \centering
    \includegraphics[width=0.43\textwidth]{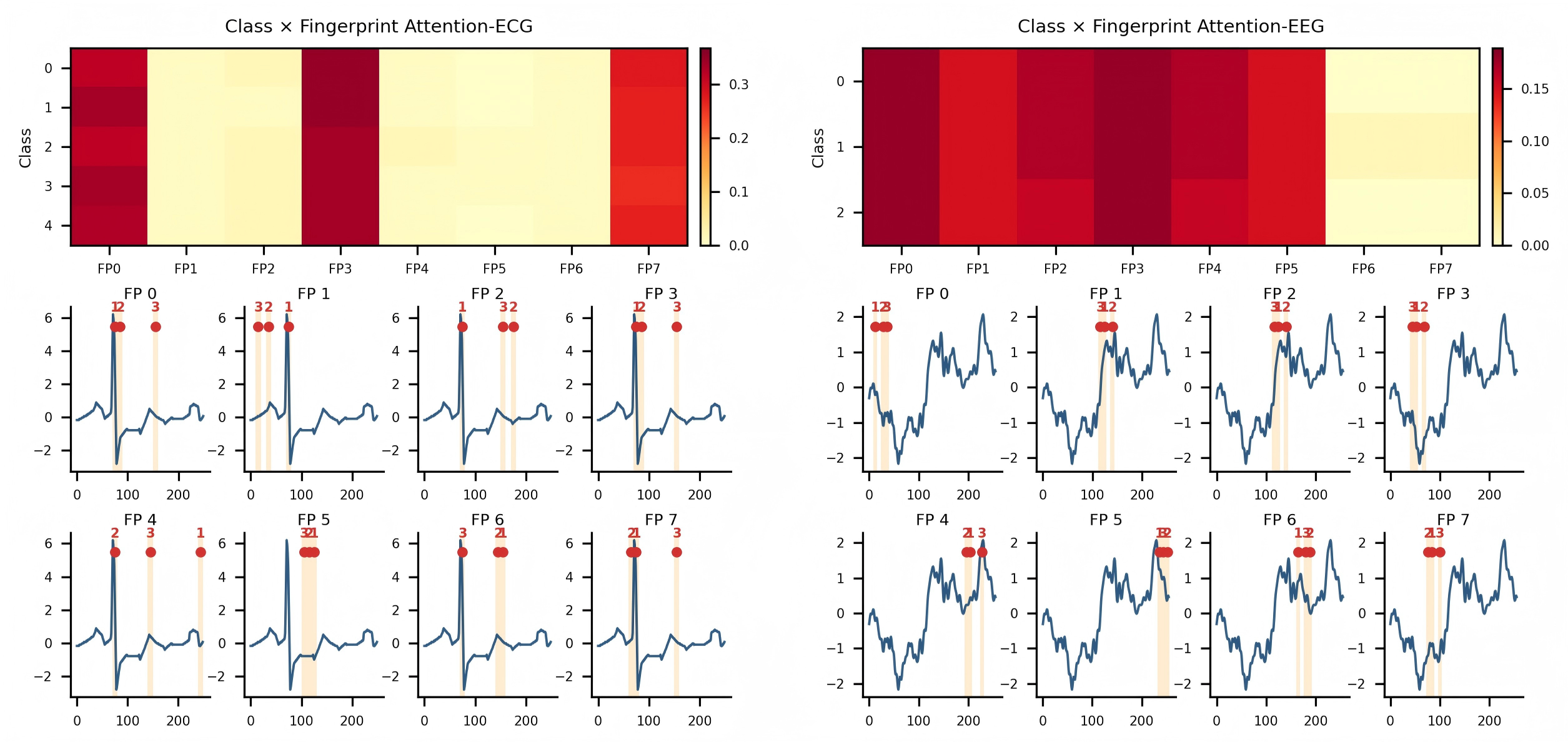}
    \caption{
\textbf{Class-wise attention allocation across fingerprint slots for two physiological modalities.} Orange shading marks the top-$k$ attention windows, while red markers denote attention ranks.
    \textbf{Left:} ECG (PTB-XL) exhibits sharp, localized attention, where the model concentrates diagnostic information into a small subset of dominant slots. 
    \textbf{Right:} EEG (ADFTD) displays distributed (but not uniform), multi-slot utilization, reflecting the higher entropy and complexity of cortical dynamics.
    These results demonstrate that TS-Fingerprint possesses an intrinsic Adaptive Sparsity mechanism, automatically tuning its usage of the latent bottleneck to match the structural complexity of the input signal without manual intervention.
    }
    \label{fig:attention_weights}
    \vspace{-22pt}
\end{figure}

To explore how our fingerprint tokens potentially adapt to different physiological complexities, we visualize the class-wise attention distribution over the 8 fingerprint slots for both ECG (PTB-XL) and EEG (ADFTD) tasks. This allows us to probe whether the model learns rigid features or adapts its utilization of the latent space based on signal entropy.

Figure~\ref{fig:attention_weights} reveals a distinct Adaptive Sparsity mechanism. For ECG, the attention distribution is highly concentrated, with nearly all probability mass assigned to three dominant slots (e.g., $k=0,3,7$). This indicates that for structurally consistent signals like ECG, the model automatically isolates core morphological cues (such as QRS complexes) into a compact subset of tokens, leaving others dormant to reduce noise. In contrast, EEG signals trigger a substantially broader allocation across the latent space. This distributed pattern reflects the heterogeneous spatial-temporal dynamics of neurodegeneration, which requires integrating information across multiple latent factors.

These findings confirm that TS-Fingerprint avoids the one-size-fits-all compression typical of Fixed-Grid models (e.g., those relying on a single class token or global pooling). Instead, it dynamically allocates representational capacity in proportion to the intrinsic complexity of the data, achieving an optimal balance between interpretability (sparsity) and expressivity (distributedness).

\subsection{Comparison with Ti-MAE and SimMTM}
\vspace{-4pt}
To quantify the transferability of the learned representations and disentangle the true contribution of pre-training strategies from the generic capacity of the backbone architecture, we benchmark TS-Fingerprint against Ti-MAE \cite{li2023ti} and SimMTM\cite{dong2023simmtm} on the heterogeneous ADFTD dataset. Our evaluation specifically focuses on Performance Promotion, which compares the relative F1 gain obtained when transitioning from training from scratch to fine-tuning on a pretrained model. This metric serves as a proxy for representational quality: a well-structured latent space should facilitate rapid and significant adaptation to downstream tasks.

Table~\ref{tab:mae_finetune_compare} reveals a fundamental limitation in existing MAE frameworks. Ti-MAE exhibits almost no performance gain ($+0.3\%$) when unfrozen. This suggests that the baseline performance of such frameworks stems primarily from the generic capacity of their heavy backbones rather than the efficacy of the pre-training strategy. Because the learned latent space is entangled and rigid, the model essentially memorizes noise during pre-training, creating a representation that is structurally incompatible with discriminative tasks. While SimMTM shows better adaptation due to its invariant learning objectives, it still lags behind our method, indicating that holistic embeddings struggle to isolate specific pathological markers from background activity.

In contrast, TS-Fingerprint achieves a massive $13.1\%$ promotion under fine-tuning, which demonstrates that our Redundancy-Constrained objective does not just compress data; it factorizes it. By initializing the latent space with orthogonal, disentangled slots, our method provides a structure that is inherently amenable to specialization, allowing the fine-tuning stage to simply align existing factors to disease classes rather than untangling a chaotic global vector.

\vspace{-5pt}
\begin{table}[htbp]
\centering
\resizebox{0.35\textwidth}{!}{
\begin{tabular}{lccc}
\toprule
\textbf{Method} & \textbf{Setting} & \textbf{Accuracy (\%)} & \textbf{F1 Score} \\
\midrule
\multirow{2}{*}{Ti-MAE} & Scratch &  50.27 & 45.88 \\
 & Pre-trained & 50.48 (+$\Delta0.41\%$) & 45.44 (-$\Delta0.96\%$) \\
\midrule
\multirow{2}{*}{SimMTM} & Scratch &   51.98  &  43.66 \\
 & Pre-trained & 52.92(+$\Delta1.81\%$)  &  45.59(+$\Delta4.42\%$) \\
\midrule
\multirow{2}{*}{Ours} & Scratch & 48.83  & 45.80 \\
 & Pre-trained & \textbf{53.91}(+$\Delta10.40\%$) & \textbf{51.79}(+$\Delta13.07\%$) \\
\bottomrule
\end{tabular}
}
\caption{
\textbf{Fine-tuning performance comparison on the ADFTD dataset.}
We evaluate the transferability of representations by comparing the gain from pre-training across different self-supervised frameworks. While Ti-MAE and SimMTM show limited or negligible gains (indicating rigid, entangled latent spaces), \textbf{TS-Fingerprint achieves a substantial performance promotion} (e.g., boosting F1 from 45.80\% to 51.79\%).
}\vspace{-20pt}
\label{tab:mae_finetune_compare}
\end{table}

\section{Conclusion}
\vspace{-8pt}
This paper addresses the open question of how to learn interpretable, disentangled representations for medical time series without sacrificing performance. To this end, we propose TS-Fingerprint, a theoretically grounded framework that compresses physiological dynamics into a compact set of orthogonal tokens. By formulating the learning objective as a Redundancy-Constrained Information Maximization problem, we successfully bridge the gap between opaque deep learning models and the clinical need for verifiable digital biomarkers. Extensive experiments on five mainstream medical benchmarks demonstrate the superiority of our approach. As a disentangled bottleneck model, TS-Fingerprint achieves consistent state-of-the-art performance, surpassing both holistic Transformers and specialized medical backbones. Crucially, our analysis reveals that this performance stems from a clear division of labor: the model is capable of isolating independent modes of variation (e.g., specific arrhythmia markers vs. noise) into distinct slots. This confirms that enforcing disentanglement is not just an interpretability constraint, but a fundamental catalyst for robust generalization in clinical settings.
\section{Impact Statements}
This work aims to advance the field of medical time-series representation learning by introducing a theoretically grounded, redundancy-constrained self-supervised framework. By enforcing structured disentanglement in the latent space, the proposed method enables more interpretable, sample-efficient, and robust representations of physiological signals such as ECG and EEG.
\subsection{Potential Positive Impact}
The primary positive impact of this work lies in its contribution to foundational machine learning methodology for healthcare data. By producing compact and disentangled latent tokens that correspond to independent physiological factors, our approach facilitates improved generalization in low-data regimes and provides a more transparent interface for downstream clinical modeling. This may support the development of more reliable digital biomarkers, improve robustness across heterogeneous patient populations, and enhance model interpretability, an essential requirement for high-stakes medical decision support systems. Importantly, our framework is designed as a representation-learning tool rather than a diagnostic system, and is intended to assist researchers and clinicians in model development rather than replace expert judgment.

\bibliographystyle{icml2026}
\bibliography{icml26}

@article{dong2023simmtm,
  title={Simmtm: A simple pre-training framework for masked time-series modeling},
  author={Dong, Jiaxiang and Wu, Haixu and Zhang, Haoran and Zhang, Li and Wang, Jianmin and Long, Mingsheng},
  journal={Advances in Neural Information Processing Systems},
  volume={36},
  pages={29996--30025},
  year={2023}
}

@article{li2023ti,
  title={Ti-mae: Self-supervised masked time series autoencoders},
  author={Li, Zhe and Rao, Zhongwen and Pan, Lujia and Wang, Pengyun and Xu, Zenglin},
  journal={arXiv preprint arXiv:2301.08871},
  year={2023}
}

@article{barber2004algorithm,
  title={The im algorithm: a variational approach to information maximization},
  author={Barber, David and Agakov, Felix},
  journal={Advances in neural information processing systems},
  volume={16},
  number={320},
  pages={201},
  year={2004}
}

@inproceedings{luo2024moderntcn,
  title={Moderntcn: A modern pure convolution structure for general time series analysis},
  author={Luo, Donghao and Wang, Xue},
  booktitle={The twelfth international conference on learning representations},
  pages={1--43},
  year={2024}
}

@inproceedings{yue2022ts2vec,
  title={Ts2vec: Towards universal representation of time series},
  author={Yue, Zhihan and Wang, Yujing and Duan, Juanyong and Yang, Tianmeng and Huang, Congrui and Tong, Yunhai and Xu, Bixiong},
  booktitle={Proceedings of the AAAI conference on artificial intelligence},
  volume={36},
  number={8},
  pages={8980--8987},
  year={2022}
}

@article{eldele2021time,
  title={Time-series representation learning via temporal and contextual contrasting},
  author={Eldele, Emadeldeen and Ragab, Mohamed and Chen, Zhenghua and Wu, Min and Kwoh, Chee Keong and Li, Xiaoli and Guan, Cuntai},
  journal={arXiv preprint arXiv:2106.14112},
  year={2021}
}

@article{vaswani2017attention,
  title={Attention is all you need},
  author={Vaswani, Ashish and Shazeer, Noam and Parmar, Niki and Uszkoreit, Jakob and Jones, Llion and Gomez, Aidan N and Kaiser, {\L}ukasz and Polosukhin, Illia},
  journal={Advances in neural information processing systems},
  volume={30},
  year={2017}
}

@inproceedings{he2022masked,
  title={Masked autoencoders are scalable vision learners},
  author={He, Kaiming and Chen, Xinlei and Xie, Saining and Li, Yanghao and Doll{\'a}r, Piotr and Girshick, Ross},
  booktitle={Proceedings of the IEEE/CVF conference on computer vision and pattern recognition},
  pages={16000--16009},
  year={2022}
}

@article{yu2020learning,
  title={Learning diverse and discriminative representations via the principle of maximal coding rate reduction},
  author={Yu, Yaodong and Chan, Kwan Ho Ryan and You, Chong and Song, Chaobing and Ma, Yi},
  journal={Advances in neural information processing systems},
  volume={33},
  pages={9422--9434},
  year={2020}
}

@article{cho2014learning,
  title={Learning phrase representations using RNN encoder-decoder for statistical machine translation},
  author={Cho, Kyunghyun and Van Merri{\"e}nboer, Bart and Gulcehre, Caglar and Bahdanau, Dzmitry and Bougares, Fethi and Schwenk, Holger and Bengio, Yoshua},
  journal={arXiv preprint arXiv:1406.1078},
  year={2014}
}

@inproceedings{chen2020simple,
  title={A simple framework for contrastive learning of visual representations},
  author={Chen, Ting and Kornblith, Simon and Norouzi, Mohammad and Hinton, Geoffrey},
  booktitle={International conference on machine learning},
  pages={1597--1607},
  year={2020},
  organization={PmLR}
}

@article{miltiadous2023dataset,
  title={A dataset of scalp EEG recordings of Alzheimer’s disease, frontotemporal dementia and healthy subjects from routine EEG},
  author={Miltiadous, Andreas and Tzimourta, Katerina D and Afrantou, Theodora and Ioannidis, Panagiotis and Grigoriadis, Nikolaos and Tsalikakis, Dimitrios G and Angelidis, Pantelis and Tsipouras, Markos G and Glavas, Euripidis and Giannakeas, Nikolaos and others},
  journal={Data},
  volume={8},
  number={6},
  pages={95},
  year={2023},
  publisher={MDPI}
}

@article{miltiadous2023dice,
  title={DICE-net: a novel convolution-transformer architecture for Alzheimer detection in EEG signals},
  author={Miltiadous, Andreas and Gionanidis, Emmanouil and Tzimourta, Katerina D and Giannakeas, Nikolaos and Tzallas, Alexandros T},
  journal={IEEe Access},
  volume={11},
  pages={71840--71858},
  year={2023},
  publisher={IEEE}
}

@article{physiobank2000physionet,
  title={Physionet: components of a new research resource for complex physiologic signals},
  author={PhysioBank, PhysioToolkit},
  journal={Circulation},
  volume={101},
  number={23},
  pages={e215--e220},
  year={2000}
}

@article{wagner2020ptb,
  title={PTB-XL, a large publicly available electrocardiography dataset},
  author={Wagner, Patrick and Strodthoff, Nils and Bousseljot, Ralf-Dieter and Kreiseler, Dieter and Lunze, Fatima I and Samek, Wojciech and Schaeffter, Tobias},
  journal={Scientific data},
  volume={7},
  number={1},
  pages={1--15},
  year={2020},
  publisher={Nature Publishing Group}
}

@article{wu2021autoformer,
  title={Autoformer: Decomposition transformers with auto-correlation for long-term series forecasting},
  author={Wu, Haixu and Xu, Jiehui and Wang, Jianmin and Long, Mingsheng},
  journal={Advances in neural information processing systems},
  volume={34},
  pages={22419--22430},
  year={2021}
}

@inproceedings{zhang2023crossformer,
  title={Crossformer: Transformer utilizing cross-dimension dependency for multivariate time series forecasting},
  author={Zhang, Yunhao and Yan, Junchi},
  booktitle={The eleventh international conference on learning representations},
  year={2023}
}

@inproceedings{zhou2022fedformer,
  title={Fedformer: Frequency enhanced decomposed transformer for long-term series forecasting},
  author={Zhou, Tian and Ma, Ziqing and Wen, Qingsong and Wang, Xue and Sun, Liang and Jin, Rong},
  booktitle={International conference on machine learning},
  pages={27268--27286},
  year={2022},
  organization={PMLR}
}

@inproceedings{zhou2021informer,
  title={Informer: Beyond efficient transformer for long sequence time-series forecasting},
  author={Zhou, Haoyi and Zhang, Shanghang and Peng, Jieqi and Zhang, Shuai and Li, Jianxin and Xiong, Hui and Zhang, Wancai},
  booktitle={Proceedings of the AAAI conference on artificial intelligence},
  volume={35},
  number={12},
  pages={11106--11115},
  year={2021}
}

@article{liu2023itransformer,
  title={itransformer: Inverted transformers are effective for time series forecasting},
  author={Liu, Yong and Hu, Tengge and Zhang, Haoran and Wu, Haixu and Wang, Shiyu and Ma, Lintao and Long, Mingsheng},
  journal={arXiv preprint arXiv:2310.06625},
  year={2023}
}

@inproceedings{zhang2024multi,
  title={Multi-resolution time-series transformer for long-term forecasting},
  author={Zhang, Yitian and Ma, Liheng and Pal, Soumyasundar and Zhang, Yingxue and Coates, Mark},
  booktitle={International conference on artificial intelligence and statistics},
  pages={4222--4230},
  year={2024},
  organization={PMLR}
}

@article{liu2022non,
  title={Non-stationary transformers: Exploring the stationarity in time series forecasting},
  author={Liu, Yong and Wu, Haixu and Wang, Jianmin and Long, Mingsheng},
  journal={Advances in neural information processing systems},
  volume={35},
  pages={9881--9893},
  year={2022}
}

@article{nie2022time,
  title={A Time Series is Worth 64Words: Long-term Forecasting with Transformers},
  author={Nie, Y},
  journal={arXiv preprint arXiv:2211.14730},
  year={2022}
}

@article{kitaev2020reformer,
  title={Reformer: The efficient transformer},
  author={Kitaev, Nikita and Kaiser, {\L}ukasz and Levskaya, Anselm},
  journal={arXiv preprint arXiv:2001.04451},
  year={2020}
}

@article{wang2024medformer,
  title={Medformer: A multi-granularity patching transformer for medical time-series classification},
  author={Wang, Yihe and Huang, Nan and Li, Taida and Yan, Yujun and Zhang, Xiang},
  journal={Advances in Neural Information Processing Systems},
  volume={37},
  pages={36314--36341},
  year={2024}
}

@article{zhang2022self,
  title={Self-supervised contrastive pre-training for time series via time-frequency consistency},
  author={Zhang, Xiang and Zhao, Ziyuan and Tsiligkaridis, Theodoros and Zitnik, Marinka},
  journal={Advances in neural information processing systems},
  volume={35},
  pages={3988--4003},
  year={2022}
}

@article{kumar2022flaap,
  title={Flaap: An open human activity recognition (har) dataset for learning and finding the associated activity patterns},
  author={Kumar, Prabhat and Suresh, S},
  journal={Procedia Computer Science},
  volume={212},
  pages={64--73},
  year={2022},
  publisher={Elsevier}
}

@article{ismail2020inceptiontime,
  title={Inceptiontime: Finding alexnet for time series classification},
  author={Ismail Fawaz, Hassan and Lucas, Benjamin and Forestier, Germain and Pelletier, Charlotte and Schmidt, Daniel F and Weber, Jonathan and Webb, Geoffrey I and Idoumghar, Lhassane and Muller, Pierre-Alain and Petitjean, Fran{\c{c}}ois},
  journal={Data Mining and Knowledge Discovery},
  volume={34},
  number={6},
  pages={1936--1962},
  year={2020},
  publisher={Springer}
}

@misc{medictests_rbbb,
  title        = {Right and Left Bundle Branch Blocks: Complete vs Incomplete},
  author       = {{MedicTests}},
  howpublished = {\url{https://medictests.com/units/bundle-branch-blocks-right-left-complete-vs-incomplete}},
  note         = {Accessed: 2025-11-30},
  year         = {2025}
}

@article{wang2024timexer,
  title={Timexer: Empowering transformers for time series forecasting with exogenous variables},
  author={Wang, Yuxuan and Wu, Haixu and Dong, Jiaxiang and Qin, Guo and Zhang, Haoran and Liu, Yong and Qiu, Yunzhong and Wang, Jianmin and Long, Mingsheng},
  journal={Advances in Neural Information Processing Systems},
  volume={37},
  pages={469--498},
  year={2024}
}

@inproceedings{xiao2024gaformer,
  title={GAFormer: Enhancing time-series transformers through group-aware embeddings,(ICLR), 2024},
  author={Xiao, Jingyun and Liu, Ran and Dyer, E},
  year={2024},
  organization={International Conference on Learning Representations}
}

@article{wan2025csfformer,
  title={CSFformer: Redefining multi-channel time series analysis with cross-scale fusion Transformer},
  author={Wan, Meng and Hao, Huan and Wang, Jue and Su, Qi and Hu, Tengfei and Cui, Yuexiu and Wang, Yangang and Shi, Peng and Wang, Shulong and Wu, Helian and others},
  journal={Neural Networks},
  pages={107921},
  year={2025},
  publisher={Elsevier}
}

@article{escudero2006analysis,
  title={Analysis of electroencephalograms in Alzheimer's disease patients with multiscale entropy},
  author={Escudero, J and Ab{\'a}solo, Daniel and Hornero, Roberto and Espino, Pedro and L{\'o}pez, Miguel},
  journal={Physiological measurement},
  volume={27},
  number={11},
  pages={1091},
  year={2006},
  publisher={IOP Publishing}
}

@inproceedings{anguita2012human,
  title={Human activity recognition on smartphones using a multiclass hardware-friendly support vector machine},
  author={Anguita, Davide and Ghio, Alessandro and Oneto, Luca and Parra, Xavier and Reyes-Ortiz, Jorge L},
  booktitle={International workshop on ambient assisted living},
  pages={216--223},
  year={2012},
  organization={Springer}
}

@article{kemp2013sleep,
  title={The sleep-EDF database online},
  author={Kemp, B},
  journal={URL http://www. physionet. org/physiobank/database/sleep-edf},
  year={2013}
}

\clearpage
\appendix
\section{Proofs of Theoretical Claims}

\subsection{Proof of Theorem \ref{theorem:infomax}}\label{proof:infomax}
\begin{proof}
We analyze the two terms of the objective function $\mathcal{L}_{\text{total}}$ independently, establishing their information-theoretic duals.

Recall the definition of Mutual Information: $I(X; F') = H(X) - H(X|F')$. Since the data distribution is fixed, $H(X)$ is constant ($C$). Maximizing $I(X; F')$ is therefore equivalent to minimizing the conditional entropy $H(X|F')$.

We invoke the variational lower bound on mutual information \cite{barber2004algorithm}. For any variational distribution $p_\theta(X|F')$ (the decoder), the following inequality holds:
\begin{small}
\begin{align}
I(X; F') \ge H(X) + \mathbb{E}_{x \sim p(X), f' \sim q_\phi(F'|x)} [\log p_\theta(x|f')].
\end{align}
\end{small}

Assume the decoder models the data as a Gaussian distribution with fixed scalar variance, $p_\theta(X|F') = \mathcal{N}(D_\theta(F'), \sigma^2 I)$. The log-likelihood is:
\begin{equation}\log p_\theta(x|f') \propto -\frac{1}{2\sigma^2} | x - D_\theta(f') |^2 + \text{const}.\end{equation}
Negating the objective to frame this as a minimization problem, we obtain:
\begin{equation}
\min_{\theta, \phi} \mathbb{E}_{x, f'} \left[ | x - D_\theta(f') |^2 \right] \iff \max_{\theta, \phi} \mathbb{E}_{x, f'} [\log p_\theta(x|f')].
\end{equation}
Thus, minimizing the Mean Squared Error ($\mathcal{L}_{\text{rec}}$) maximizes the variational lower bound $\mathcal{I}_{LB}(X; F')$, enforcing sufficiency of the representation $F'$ for $X$.

The term $\mathcal{L}_{\text{div}}$ is defined via the Total Coding Rate (TCR). The Total Correlation (TC) for the random vector $F'$ is defined as the Kullback-Leibler divergence between the joint distribution and the product of marginals:
\begin{align}
    \text{TC}(F') &= D_{KL}\left( p(F') \parallel \prod_{i=1}^k p(f'i) \right) \\
&= \sum_{i=1}^k H(f'i) - H(F').
\end{align}

Under the assumption that $F'$ follows a Gaussian distribution (consistent with the TCR formulation), the entropy is given by $H(F') = \frac{1}{2}\log\det(\Sigma_{F'}) + \text{const}$, where $\Sigma_{F'}$ is the covariance matrix of $F'$. The Total Correlation becomes:
\begin{equation}
\text{TC}(F') = \frac{1}{2} \sum{i=1}^k \log(\Sigma_{ii}) - \frac{1}{2} \log\det(\Sigma_{F'}),
\end{equation}
which is geometrically equivalent to the expansion vs. compression objective in $\mathcal{L}_{\text{div}}$ ~\cite{yu2020learning}. Thus, minimizing $\mathcal{L}_{\text{div}}$ directly minimizes $\text{TC}(F')$, thereby enforcing statistical independence among the latent dimensions. 

Therefore, minimizing $\mathcal{L}_{\text{total}} = \mathcal{L}_{\text{rec}} + \lambda \mathcal{L}_{\text{div}}$ corresponds to the constrained optimization problem:
\begin{equation}
\min_{\theta, \phi} \left( -\mathbb{E}[\log p_\theta(X|F')] + \lambda \text{TC}(F') \right).
\end{equation}

This is formally equivalent to maximizing the variational information bound subject to an independence constraint:\begin{equation}\max_{\theta, \phi} \underbrace{\mathcal{I}_{LB}(X; F')}_{\text{Sufficiency}} - \lambda \underbrace{\text{TC}(F')}_{\text{Independence}},\end{equation}which concludes the proof.\end{proof}

\subsection{Proof of Lemma \ref{lemma:orthogonality}}\label{proof:orthogonality}
\begin{proof} We proceed by establishing the information-theoretic definition of redundancy and applying matrix inequalities to link this definition to the geometry of the covariance matrix.

Recall from Theorem \ref{theorem:infomax} that redundancy is quantified by the Total Correlation: 
\begin{equation} 
\text{TC}(F') = \sum_{i=1}^k H(f'i) - H(F'). 
\end{equation}
Assuming $F'$ follows a multivariate Gaussian distribution $\mathcal{N}(\mu, \Sigma_{F'})$ (after normalization), the entropies are given by:
\begin{align}
H(f'_i) &= \frac{1}{2} \log(2\pi e \Sigma{_0ii}) \\\nonumber
H(F') &= \frac{1}{2} \log(\det(2\pi e \Sigma_{F'})) \\ 
&= \frac{1}{2} \log(\det(\Sigma_{F'})) + \text{const}.
\end{align}
Substituting these into the expression for $\text{TC}(F')$:
\begin{align}
\nonumber
\text{TC}(F') &= \frac{1}{2} \left( \sum_{i=1}^k \log(\Sigma_{ii}) - \log(\det(\Sigma_{F'})) \right) \\
&= \frac{1}{2} \log \left( \frac{\prod_{i=1}^k \Sigma_{ii}}{\det(\Sigma_{F'})} \right).
\end{align}
We invoke \textbf{Hadamard's Inequality}, which states that for any positive semi-definite matrix $\Sigma_{F'}$:
\begin{equation}
\det(\Sigma_{F'}) \le \prod_{i=1}^k \Sigma_{ii}.
\end{equation}
Equality holds if and only if $\Sigma_{F'}$ is diagonal. Consequently, the term inside the logarithm is always $\ge 1$, and $\text{TC}(F') \ge 0$. The global minimum $\text{TC}(F') = 0$ is achieved if and only if equality holds in Hadamard's inequality, i.e., when $\Sigma_{F'}$ is diagonal.

The entries of the covariance matrix $\Sigma_{F'}$ are defined as $\Sigma_{ij} = \text{Cov}(f'_i, f'_j)$. If the data is centered (zero mean), this equates to the expected inner product:
\begin{equation}
\Sigma_{ij} = \mathbb{E}[ \langle f'i, f'j \rangle ].
\end{equation}
A diagonal matrix implies that all off-diagonal entries are zero: 
\begin{align} 
&\Sigma_{F'} \text{ is diagonal} \iff \forall i \neq j, \\
&\Sigma_{ij} = 0 \implies \mathbb{E}[\langle f'_i, f'_j \rangle] = 0. 
\end{align} 
Thus, minimizing redundancy implies decorrelation (orthogonality in expectation).

Consider the optimization problem where the "energy" (variance) of each feature is fixed, i.e., $\Sigma_{ii} = c$ for all $i$. The objective function simplifies to:
\begin{align}\nonumber
&\min_{\Sigma_{F'}} \text{TC}(F') \iff \\\nonumber
&\min_{\Sigma_{F'}} \left( C - \log \det(\Sigma_{F'}) \right) \iff \\
&\max_{\Sigma_{F'}} \det(\Sigma_{F'}).
\end{align}
Geometrically, $\sqrt{\det(\Sigma_{F'})}$ represents the volume of the parallelotope spanned by the covariance eigenvectors. Maximizing this volume subject to fixed edge lengths (fixed variances) forces the vectors to be orthogonal.

Therefore, minimizing $\mathcal{L}_{\text{div}}$ (which acts as a proxy for $\text{TC}$) under fixed energy constraints forces $\Sigma_{F'}$ to be diagonal, which is equivalent to enforcing pairwise orthogonality $\langle f'_i, f'_j \rangle \approx 0$.

\end{proof}

\subsection{Proof of Proposition \ref{prop:mode-collapse}}\label{proof:mode-collapse}
\begin{proof}
We analyze the reconstruction error $\mathcal{E} = \mathbb{E}[\|X - \hat{X}\|^2]$ under the assumption of a linear decoder (or a local linear approximation of the manifold), which allows for spectral analysis. The reconstruction is given by the orthogonal projection of $X$ onto the subspace spanned by the encoder weights $W \in \mathbb{R}^{D \times k}$.

We first do the decomposition of reconstruction error. Let $\mathcal{S}$ be the subspace spanned by the $k$ latent tokens (basis vectors $w_1, \dots, w_k$). The optimal reconstruction $\hat{X}$ is the orthogonal projection of $X$ onto $\mathcal{S}$. The expected squared error is the trace of the residual covariance:
\begin{equation}
\mathcal{E}(\mathcal{S}) = \text{Tr}(\Sigma_X) - \text{Tr}(P_{\mathcal{S}} \Sigma_X P_{\mathcal{S}}),
\end{equation}
where $P_{\mathcal{S}}$ is the projection operator onto $\mathcal{S}$. Using the spectral decomposition $\Sigma_X = \sum_{j=1}^D \lambda_j v_j v_j^T$, the total variance is $\text{Tr}(\Sigma_X) = \sum_{j=1}^D \lambda_j$. Minimizing error is equivalent to maximizing the captured variance $\text{Tr}(P_{\mathcal{S}} \Sigma_X P_{\mathcal{S}})$.

In the single token case, for $k=1$, the subspace $\mathcal{S}$ is spanned by a single unit vector $w_1$. The captured variance is maximized when $w_1$ aligns with the principal eigenvector $v_1$ (by the Rayleigh Quotient definition). The captured variance is $\lambda_1$.
The approximation error is:
\begin{equation}
\mathcal{E}_{k=1} = \sum{j=1}^D \lambda_j - \lambda_1 = \sum_{j=2}^D \lambda_j.
\end{equation}
Assuming the effective rank is $m$, the terms for $j > m$ are negligible, yielding the bound $\sum_{j=2}^m \lambda_j$. This represents "mode collapse" where only the dominant variation is captured.

In the multi-token case with diversity ($k > 1$), let the representation consist of $k$ tokens spanning subspace $\mathcal{S}_k$. The standard reconstruction loss $\mathcal{L}_{\text{rec}}$ encourages maximizing captured variance. However, without constraints, multiple tokens may collapse into the same dominant eigenspace (e.g., $w_1 \approx w_2 \approx v_1$), failing to reduce the error significantly below the $k=1$ case.

We invoke Lemma \ref{lemma:orthogonality}: minimizing $\mathcal{L}_{\text{div}}$ enforces pairwise orthogonality of the latent factors. This constrains the basis vectors $\{w_1, \dots, w_k\}$ to be orthogonal.
According to the \textbf{Eckart-Young-Mirsky Theorem}, the optimal rank-$k$ approximation of $\Sigma_X$ under orthogonality constraints is unique and corresponds to the subspace spanned by the top-$k$ eigenvectors $\{v_1, \dots, v_k\}$.

The captured variance becomes $\sum_{j=1}^k \lambda_j$. Consequently, the approximation error is:
\begin{equation}
\mathcal{E}_{k, \text{orth}} = \sum_{j=1}^D \lambda_j - \sum_{j=1}^k \lambda_j = \sum_{j=k+1}^D \lambda_j.
\end{equation}

Comparing the error terms, since eigenvalues are non-negative and sorted:
\begin{equation}
\sum_{j=k+1}^D \lambda_j < \sum_{j=2}^D \lambda_j \quad (\text{for } k \ge 2 \text{ and } \lambda_2 > 0).
\end{equation}
Thus, enforcing diversity via $\mathcal{L}_{\text{div}}$ ensures the representation covers distinct eigen-directions (modes), strictly reducing the approximation error by the sum of the spectral energy of the covered modes $\lambda_2, \dots, \lambda_k$.

\end{proof}

\subsection{Proof of Proposition \ref{prop:sample-complexity}} \label{proof:sample-complexity}
\begin{proof}
Let $\mathcal{Z} = \mathbb{R}^k$ be the domain of the latent representation consisting of $k$ tokens. Let the true labeling function $c^*: \mathcal{Z} \to \mathcal{Y}$ be defined by a sparse set of ground-truth factors $z^* \in \mathbb{R}^k$. Specifically, $c^*(z^*) = \text{sign}(\langle w^*, z^* \rangle)$, where the weight vector $w^*$ is $s$-sparse ($\|w^*\|_0 \le s$), with $s \ll k$.

We analyze the sample complexity $m(\epsilon, \delta)$, defined as the minimal number of examples required to guarantee that with probability $1-\delta$, the empirical risk minimizer has a generalization error of at most $\epsilon$. By the Fundamental Theorem of Statistical Learning, for a hypothesis class $\mathcal{H}$ with finite VC-dimension, the sample complexity satisfies:
\begin{equation}
    m_{\mathcal{H}}(\epsilon, \delta) \in \Theta\left( \frac{\text{VC}(\mathcal{H}) + \log(1/\delta)}{\epsilon} \right).
\end{equation}

Assume the learned representation $F'$ is perfectly disentangled (axis-aligned with $z^*$). In this regime, the target concept remains $s$-sparse. The learner can restrict the search to the hypothesis class of $s$-sparse linear separators:
\begin{equation}
    \mathcal{H}_{\text{sparse}} = \{ x \mapsto \text{sign}(\langle w, x \rangle) \mid w \in \mathbb{R}^k, \|w\|_0 \le s \}.
\end{equation}
The VC-dimension of this class is bounded by $O(s \log k)$ (specifically $s \log(ek/s)$). Substituting this into the sample complexity bound:
\begin{equation}
    m_{\text{dis}} \in O\left( \frac{s \log k}{\epsilon} \right).
\end{equation}

Assume the representation is entangled, modeled as $F' = U z^*$, where $U \in \mathbb{R}^{k \times k}$ is a dense rotation matrix. The optimal separator in this rotated space becomes $w' = U^{-T} w^*$. Since $U$ is dense, $w'$ becomes dense ($\|w'\|_0 \approx k$) despite $w^*$ being sparse.
Consequently, a sparse classifier from $\mathcal{H}_{\text{sparse}}$ would suffer from high approximation error (bias $>$ $\epsilon$). To achieve low error, the learner must employ the class of \textit{dense} linear separators:
\begin{equation}
    \mathcal{H}_{\text{dense}} = \{ x \mapsto \text{sign}(\langle w, x \rangle) \mid w \in \mathbb{R}^k \}.
\end{equation}
The VC-dimension of affine hyperplanes in $\mathbb{R}^k$ is exactly $k+1$. Thus, the lower bound on sample complexity is:
\begin{equation}
    m_{\text{ent}} \in \Omega\left( \frac{k}{\epsilon} \right).
\end{equation}

Comparing the two regimes in the limit of high token count $k$ and fixed sparsity $s$:
\begin{equation}
    \frac{m_{\text{dis}}}{m_{\text{ent}}} \propto \frac{s \log k}{k}.
\end{equation}
Since $s \ll k$, we have $m_{\text{dis}} \ll m_{\text{ent}}$. This confirms that enforcing disentanglement transforms the learning problem from one that scales linearly with the bottleneck size ($k$) to one that scales logarithmically, strictly improving learnability in the low-data regime.
\end{proof}

\subsection{Proof of Theorem ~\ref{theorem:robustness}}\label{proof:robustness}



\begin{definition}[Latent Gram Matrix]\label{definition:gram_matrix}
For a latent representation matrix $F = [f'_1, \dots, f'_k] \in \mathbb{R}^{d \times k}$ consisting of $k$ tokens, let the Gram matrix $G \in \mathbb{R}^{k \times k}$ be defined by $G_{ij} = \langle f'_i, f'_j \rangle$. We assume the tokens are standardized such that $\|f'_i\|_2 = 1$ for all $i \in [k]$.
\end{definition}

\begin{definition}[Noise Sensitivity]\label{definition:noise_sensitivity}
Let $w \in \mathbb{R}^{d}$ be a linear readout vector required to produce a specific target activation pattern $y \in \mathbb{R}^k$ on the latent tokens (satisfying $F^\top w = y$). Let $\eta \in \mathbb{R}^{d}$ be a noise vector drawn from an isotropic distribution with mean 0 and variance $\sigma^2 I$. The \emph{Noise Sensitivity} of the readout is defined as the variance of the prediction under noise:
\begin{align}
    \mathcal{S}(w, F) \triangleq \text{Var}_{\eta}[w^\top (f'_i + \eta)].
\end{align}
\end{definition}

\begin{proof}
The proof proceeds in two steps: first, establishing the relationship between sensitivity and the norm of the weights; second, determining the weight norm required to resolve the worst-case target pattern $y$ under the geometry induced by $G$.

The predictor output on a perturbed input is:
\begin{align}
    w^\top (f'_i + \eta) = w^\top f'_i + w^\top \eta.
\end{align}
Since $\eta$ is zero-mean isotropic noise, the variance is determined solely by the noise term:
\begin{align}
    \text{Var}(w^\top \eta) = \mathbb{E}\left[ \left( \sum_{j=1}^d w_{j} \eta_{j} \right)^2 \right].
\end{align}
Given the independence of noise entries, cross-terms vanish, yielding:
\begin{align}
\label{eq:sensitivity}
    \mathcal{S} = \sum_{j} w_{j}^2 \text{Var}(\eta_{j}) = \sigma^2 \|w\|_2^2.
\end{align}
Thus, minimizing sensitivity is equivalent to minimizing the norm of the readout vector $w$.

We now calculate the minimum norm $w$ required to satisfy the readout constraint $F^\top w = y$. This is an underdetermined linear system (assuming $d > k$). The minimum norm solution is given by the pseudo-inverse:
\begin{align}
    w = F(F^\top F)^{-1}y = F G^{-1} y.
\end{align}
Substituting this solution into the norm calculation:
\begin{align}
\nonumber
    \|w\|_2^2 &= (F G^{-1} y)^\top (F G^{-1} y) \\
    &= y^\top G^{-1} F^\top F G^{-1} y = y^\top G^{-1} y.
\end{align}

To analyze the worst-case robustness, we consider the target pattern $y$ (with unit norm $\|y\|_2=1$) that maximizes this quantity. Let the eigendecomposition of $G$ be $V \Lambda V^\top$. The quadratic form is maximized when $y$ aligns with the eigenvector corresponding to the largest eigenvalue of $G^{-1}$, which is the reciprocal of the smallest eigenvalue of $G$:
\begin{align}
    \max_{\|y\|=1} \|w\|_2^2 = \lambda_{\max}(G^{-1}) = \frac{1}{\lambda_{\min}(G)}.
\end{align}

Substituting this into the sensitivity result from Eq \ref{eq:sensitivity}: 
\begin{align}
    \mathcal{S}_{\text{worst-case}} = \frac{\sigma^2}{\lambda_{\min}(G)}.
\end{align}

If the tokens are orthogonal, $G=I$ and $\lambda_{\min}(G)=1$. If tokens are entangled (correlated), $G$ is positive semi-definite with unit diagonal elements ($G_{ii}=1$), implying $\sum \lambda_i = \text{Tr}(G) = k$. By property of the arithmetic trace, if $G \neq I$, the eigenvalues must spread around the mean of 1; therefore, there exists at least one $\lambda_j > 1$ and consequently $\lambda_{\min} < 1$. Thus, $\frac{1}{\lambda_{\min}} > 1$, strictly increasing the necessary weight norm and the resulting noise sensitivity compared to the orthogonal case.
\end{proof}

\section{Dataset Descriptions}
\begin{table}[htbp]
\centering
\caption{Summary of the processed datasets used in our experiments.
The table reports the number of subjects, samples, classes, channels,
timestamps per sample, patch size we use for experiments, and modality.}
\label{tab:dataset_info}
\resizebox{\linewidth}{!}{
\begin{tabular}{lccccccc}
\toprule
Dataset & \#Subject & \#Sample & \#Class & \#Channel & \#Timestamps & Patch Size & Modality \\
\midrule
ADFTD     & 88    & 69,752   & 3  & 19 & 256  & 8 & EEG \\
PTB       & 198   & 64,356   & 2  & 15 & 300  & 10 & ECG \\
PTB-XL    & 17,596& 191,400  & 5  & 12 & 250  & 10 & ECG \\
APAVA     & 23    & 5,967    & 2  & 16 & 256  & 8 & EEG \\
FLAAP     & 8     & 530,000  & 10 & 6  & 100  & 10 & IMU (Acc+Gyro) \\
UCIHAR    & 30    & 10,299   & 6  & 9  & 128  & 32 & IMU (Acc+Gyro) \\
Sleep-EDF & 100   & 42,308   & 5  & 1  & 3000 & 60 & EEG \\
\bottomrule
\end{tabular}
}
\end{table}

(1) \textbf{ADFTD} \cite{miltiadous2023dataset,miltiadous2023dice} is an EEG dataset with a three-class label for each sample, 
categorizing the subject as Healthy, having Frontotemporal Dementia (FTD), or Alzheimer’s 
disease (AD). 
(2) \textbf{PTB} \cite{physiobank2000physionet} is an ECG dataset where each sample is assigned a binary label
indicating whether the subject has Myocardial Infarction. 
(3) \textbf{PTB-XL} \cite{wagner2020ptb} is a large-scale ECG dataset with a five-class diagnostic 
label per sample, representing a variety of cardiac conditions. (4) \textbf{APAVA} \cite{escudero2006analysis} is a public EEG time series dataset with 2 classes and 23 subjects, including 12
patients with Alzheimer's disease and 11 healthy control subjects. (5) \textbf{FLAAP} \cite{kumar2022flaap} is a smartphone-based human-activity dataset with ten activity classes, containing six-channel IMU signals sampled at 100 \,Hz.
(6) \textbf{UCIHAR} \cite{anguita2012human} is a smartphone-based human-activity dataset with six activity classes, containing nine-channel IMU signals sampled at 128\,Hz.(7) \textbf{Sleep-EDF} \cite{kemp2013sleep} is a public EEG time series dataset with 100 subjects and 5 classes, which is used for the automatic detection of sleep stages.

We used patient-wise splitting protocols for all datasets to prevent data leakage. Regarding the allocation of training,validation, and testing sets, we follow the protocol proposed by Medformer\cite{wang2024medformer}.

\section{Concrete Example of Disease Identification}

To further demonstrate the clinical interpretability of our framework, we present a case study on Right Bundle Branch Block (RBBB). RBBB is a conduction abnormality caused by delayed right ventricular depolarization that produces a distinctive ECG morphology. According to standard cardiology references~\cite{medictests_rbbb}, an ``M-shaped'' QRS complex in lead~V1 is a hallmark indicator of RBBB.

Using an RBBB-positive recording, we examine the fingerprint activations learned by our model. As shown in Figure~\ref{fig:example}, the most dominant fingerprint assigns high importance to physiologically meaningful regions, including the Q, R, and S deflections. These highlighted segments align with the canonical RBBB morphology, indicating that the model grounds its prediction in clinically relevant temporal structures rather than spurious correlations. 

This example illustrates how our physiological fingerprints provide transparent and traceable evidence for model decisions, offering an interpretable bridge between unsupervised representation learning and established domain knowledge.
\begin{figure}[htbp]
    \centering
    \includegraphics[width=0.45\textwidth]{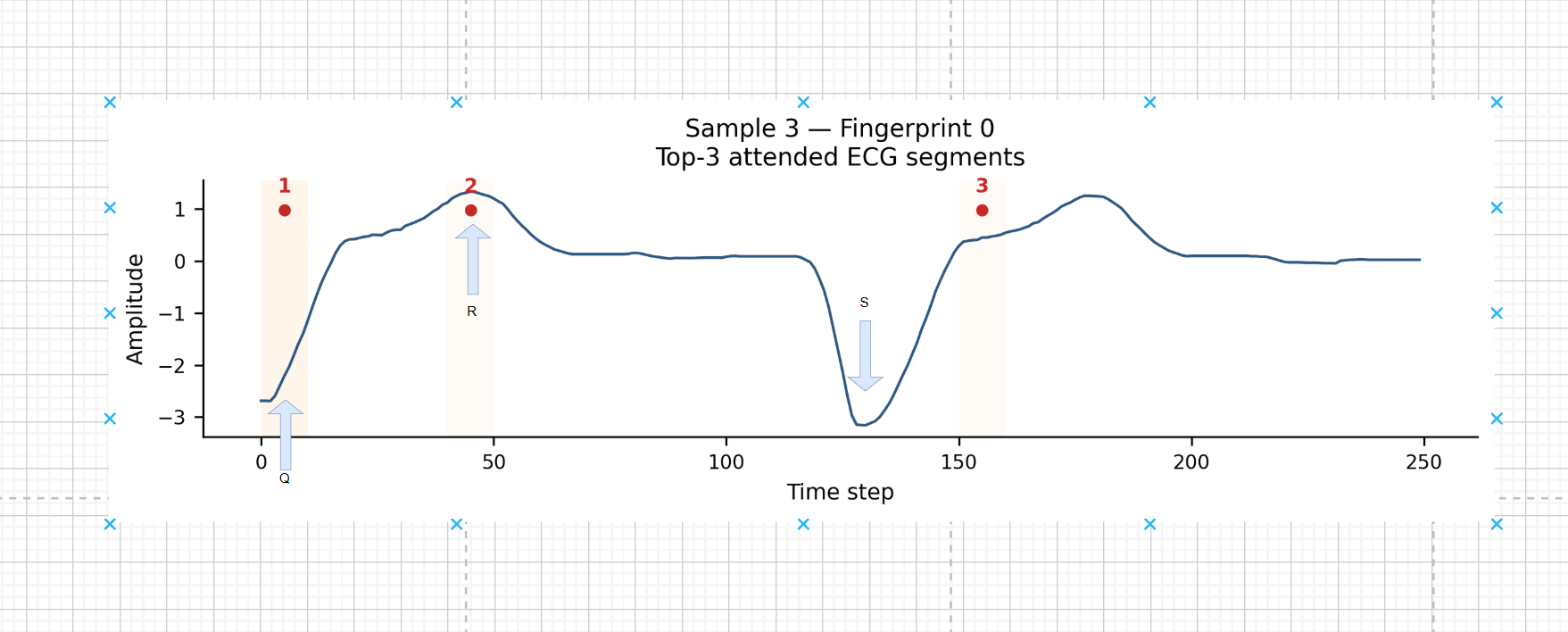}
    \caption{
        Example of fingerprint-driven interpretability on an RBBB-positive ECG sample (lead~V6 shown). 
        The highlighted patches correspond to high fingerprint activation on QRS components.
    }
    \label{fig:example}
\end{figure}
\section{Synthetic Motif-Based Dataset}
\label{app:synthetic_dataset}

To systematically analyze the interpretability and functional specialization of fingerprint tokens, 
we construct a controlled synthetic time-series dataset with explicit, programmatically defined temporal perturbations. 
Unlike real-world physiological signals, this dataset provides known ground-truth structure, 
allowing us to probe how different fingerprint tokens respond to salient temporal distortions.


\subsection{Base Signal Generation}
Each sample is a univariate time series of length $T=1000$. 
The base signal is shared across all classes and is designed to resemble smooth, low-frequency physiological dynamics.
Specifically, the base signal is generated as the superposition of:
(i) a low-frequency sinusoidal trend, 
(ii) a stochastic drift term obtained via a normalized random walk, and 
(iii) additive Gaussian noise.
This base component ensures that classification cannot rely on trivial global statistics and instead requires 
localizing perturbation-dominant regions.

\subsection{Perturbation Motifs}

On top of the base signal, we inject a small number of localized temporal perturbations (motifs).
Each perturbation corresponds to a distinct type of signal distortion:
\begin{itemize}
    \item \textbf{Drop:} a sustained negative-amplitude segment simulating signal dropout or abrupt suppression;
    \item \textbf{Oscillation:} a mid-frequency oscillatory segment of limited temporal extent.
\end{itemize}
For each sample, exactly one perturbation type is designated as the \emph{dominant motif} according to its class label.
The dominant motif is injected multiple times at random temporal locations, 
while other motif types are absent, ensuring clear attribution between perturbation structure and class identity.

\subsection{Class Semantics and Learning Objective}

Importantly, class labels in this dataset are defined by the presence and temporal localization of perturbations, 
rather than by subtle differences in global signal statistics.
As a result, the optimal classification strategy requires identifying and attending to perturbation-dominant regions,
rather than modeling the full background dynamics.

This design intentionally encourages the emergence of a perturbation-sensitive fingerprint token, 
while allowing remaining tokens to capture smoother background structure.
Consequently, a single fingerprint may consistently dominate class prediction, 
which reflects functional specialization rather than representational collapse.

\subsection{Usage in Experiments}

The synthetic dataset is used in two stages.
First, the TSFingerprint encoder is pre-trained in a self-supervised manner using only the reconstruction and orthogonality objectives.
Second, the fingerprint representations are frozen, and a lightweight attention-based classifier is trained on top of the fingerprint tokens.
This setup allows us to isolate the role of fingerprint tokens in downstream decision-making and 
to analyze how token-level attention aligns with injected perturbation regions.

\section{Limitations and Future Work}
While TS-Fingerprint establishes a rigorous framework for disentangled representation learning, our current implementation relies on a standard linear patch-based embedding layer to tokenize the raw input. This design, while effective for capturing global dependencies, does not explicitly model multi-granularity temporal shifts or frequency-specific features at the input level, as seen in recent specialized backbones like Medformer \cite{wang2024medformer}.
For complex pathologies requiring micro-structure analysis (e.g., subtle high-frequency oscillations in EEG), this "naive" embedding may limit the encoder's sensitivity. Future work will explore integrating our information-theoretic bottleneck with more sophisticated hierarchical embedding modules to further enhance the resolution of the learned digital biomarkers.

\section{Disclosure of Generative AI Usage}
Figure \ref{fig:paradigm_shift} (Top) was initially hand-drawn and then refined with AI to correct the details. Besides that, LLM tools are strictly limited to only automate grammar checks and word autocorrect.

\end{document}